\documentclass[letterpaper]{article} 
\usepackage{aaai24} 
\usepackage{times}  
\usepackage{helvet}  
\usepackage{courier}  
\usepackage[hyphens]{url}  
\usepackage{graphicx} 
\urlstyle{rm} 
\usepackage{natbib}  
\usepackage{caption} 
\frenchspacing  
\setlength{\pdfpagewidth}{8.5in} 
\setlength{\pdfpageheight}{11in} 
%
\pdfinfo{
/TemplateVersion (2024.1)
}

%
\usepackage{algorithm}
\usepackage{algorithmic}

\usepackage{booktabs}
\usepackage{amsmath}
\usepackage{amsthm} 
\usepackage{amssymb} 
\usepackage{multirow} 
\usepackage{makecell}
\usepackage{pifont}

\newcommand \multiset[1] {\left\{\!\!\left\{#1\right\}\!\!\right\} }
\newcommand \spm[1] {\footnotesize{$\pm$#1} } 
\newcommand \f[1]{\textbf{#1}}
\newcommand \s[1]{\underline{#1}}
\newcommand \thi[1]{#1}

\theoremstyle{plain}
\newtheorem{theorem}{Theorem}[section]

\newtheorem{lemma}[theorem]{Lemma}

\theoremstyle{definition}
\newtheorem{definition}[theorem]{Definition}

\theoremstyle{remark}


\usepackage{newfloat}
\usepackage{listings}
\DeclareCaptionStyle{ruled}{labelfont=normalfont,labelsep=colon,strut=off} 
\lstset{%
	basicstyle={\footnotesize\ttfamily},
	numbers=left,numberstyle=\footnotesize,xleftmargin=2em,
	aboveskip=0pt,belowskip=0pt,%
	showstringspaces=false,tabsize=2,breaklines=true}
\floatstyle{ruled}
\newfloat{listing}{tb}{lst}{}
\floatname{listing}{Listing}

\setcounter{secnumdepth}{2} 

%


\title{Higher-Order Graph Convolutional Network with Flower-Petals Laplacians \\on Simplicial Complexes}

\author{
Yiming Huang\textsuperscript{\rm 1,2,}\equalcontrib,
Yujie Zeng\textsuperscript{\rm 1,2,}\equalcontrib,
Qiang Wu\textsuperscript{{\rm 3,}\textdagger},
Linyuan L{\"u}\textsuperscript{\rm 4,1,2,}\thanks{Corresponding author.}
}
\affiliations{
    \textsuperscript{\rm 1}Yangtze Delta Region Institute (Huzhou), University of Electronic Science and Technology of China\\
    \textsuperscript{\rm 2} Institute of Fundamental and Frontier Sciences, University of Electronic Science and Technology of China\\
    \textsuperscript{\rm 3} Research Institute of Electronic Science and Technology, University of Electronic Science and Technology of China\\
    \textsuperscript{\rm 4}School of Cyber Science and Technology, University of Science and Technology of China\\
    \{yiming\_huang, yujie\_zeng\}@std.uestc.edu.cn, \{qiang.wu, linyuan.lv\}@uestc.edu.cn
%
}

\begin{document}

\maketitle

\begin{abstract}
Despite the recent successes of vanilla Graph Neural Networks (GNNs) on various tasks, their foundation on pairwise networks inherently limits their capacity to discern latent higher-order interactions in complex systems.
To bridge this capability gap, we propose a novel approach exploiting the rich mathematical theory of simplicial complexes (SCs) - a robust tool for modeling higher-order interactions.
Current SC-based GNNs are burdened by high complexity and rigidity, and quantifying higher-order interaction strengths remains challenging.
Innovatively, we present a higher-order Flower-Petals (FP) model, incorporating FP Laplacians into SCs. Further, we introduce a Higher-order Graph Convolutional Network (HiGCN) grounded in FP Laplacians, capable of discerning intrinsic features across varying topological scales.
By employing learnable graph filters, a parameter group within each FP Laplacian domain, we can identify diverse patterns where the filters' weights serve as a quantifiable measure of higher-order interaction strengths.
The theoretical underpinnings of HiGCN's advanced expressiveness are rigorously demonstrated. Additionally, our empirical investigations reveal that the proposed model accomplishes state-of-the-art performance on a range of graph tasks and provides a scalable and flexible solution to explore higher-order interactions in graphs.
Codes and datasets are available at \url{https://github.com/Yiminghh/HiGCN}.
\end{abstract}

\section{Introduction}
Graphs are ubiquitous in representing irregular relations in various scenarios. 
However, they are inherently constrained to modeling pairwise interactions exclusively \cite{HigherOrderReview2020}.
Many empirical systems display group interactions, going beyond pairwise connections, such as social systems \cite{SocialNet2010}, neuronal networks \cite{BrainNet2011}, and ecological networks \cite{Ecology2017}.
However, such higher-order interactions can hardly be modeled or approximated by pairwise graphs.
In addition, it is still elusive how to quantify the higher-order interaction strength, although many studies have demonstrated its existence \cite{battiston2021physics}.

Graph neural networks (GNNs) can exploit the features and topology of graphs simultaneously, thereby triggering a wide-spreading research interest and endeavor in various graph learning tasks such as recommender systems and new drug discovery.
In particular, spectral GNNs have been widely recognized for their rigorous mathematical theory.
Nevertheless, pairwise-graph-based GNNs fail to capture latent higher-order interactions prevalent in empirical systems, and their expressive power was proved to be upper bounded by Weisfeiler-Lehman (WL) test \cite{GIN2019}.

Simplicial complexes (SCs) and hypergraphs have emerged to study higher-order interactions beyond conventional pairwise descriptors \cite{HigherOrderReview2020}. 
While hypergraph learning has made fruitful progress \cite{HypergraphLearning22review}, it typically ignores relations within the hyperedges, and the construction of hypergraphs is often under-optimized.
%
The simplicial description is another potent tool with elegant mathematical theories to draw from, paving a middle ground between graphs and hypergraphs.
It has been found that SCs play a vital role in social contagion, synchronization, brain network analysis, etc.  
%

Deep learning facilitated simplicial complex theory is a fresh perspective and a promising research field.
Several simplicial GNNs have been proposed by simply replacing the graph Laplacian with the Hodge Laplacian \cite{hodge_schaub2020random}. 
%
A simplicial WL test is proposed along with its neural version MPSN \cite{SWL2021} based on the adjacency relations that Hodge theory defines.
MPSN is proved to be more powerful than vanilla GNNs under ideal conditions, implying the potential of extending graph representation learning to SCs.
%

%
In summary, pairwise GNNs fail to capture latent group interactions prevalent in complex systems, and the expressive power of such models was proved to be upper bounded by the WL-test. 
As an emerging and promising research field, simplicial GNNs have initially shown their potential to outperform pairwise GNNs. 
However, existing models are limited by their high complexity and low flexibility.

In this paper, we introduce a novel higher-order flower-petals (FP) representation based on two-step random walk dynamics \cite{ISMnet} between the flower core and petals.
This representation enables us to incorporate interactions among simplices of various orders into graph learning.
Higher-order graph convolutional network (HiGCN) is then proposed by employing learnable and tailored convolution filters (group of parameters) in different FP Laplacian domains. 
The learnable filters can learn arbitrary shapes and deal with high and low-frequency parts of the simplicial signals adaptively. 
Hence, the proposed HiGCN model can learn the simplex patterns of disparate classes and higher-order structures simultaneously. 
Moreover, the filters' weights in different orders can quantify the higher-order interaction strength, contributing to a deeper understanding of higher-order mechanisms in complex systems.
We also interpret HiGCN from the message-passing perspective and theoretically demonstrate its superior expressive power. 
Numerical experiments on various graph tasks further pinpoint that the proposed model has outperformed state-of-the-art (SOTA) methods.

\paragraph{Main contributions.}

To summarise, we construct an innovative higher-order flower-petals (FP) model and FP Laplacians from the random walk dynamics to capture interactions among simplices of different orders. We then propose a higher-order graph convolutional network (HiGCN) based on our FP Laplacians, which is demonstrated to have superior expressiveness in theory and significant performance gains in various empirical experiments. Furthermore, a data-driven strategy is employed to quantify the higher-order interaction strength. In general, our work is an important step towards advancing higher-order graph learning and understanding higher-order mechanisms.

\section{Related Work}

In this section, we briefly review related works on vanilla spectral GNNs and higher-order GNNs.

\paragraph{Spectral GNNs.} 
Spectral GNNs are based on the graph Fourier transform \cite{graphFourier2013}, which employs the graph Laplacian eigenbasis as an analogy of the Fourier transform. 
ChebNet \cite{ChebNet} employs Chebyshev polynomials to replace the convolutional core, while GCN \cite{GCN} uses a first-order approximation of the convolution operator.
By considering the relationship between GCN and PageRank, APPNP \cite{APPNP} is proposed via personalized PageRank.
GPRGNN \cite{GPRGNN} leverages a learnable graph filter, exhibiting superiority in heterogeneous graph learning.
The filter forms of some spectral GNNs are summarized in Table \ref{tab:filter_form}.

\paragraph{Higher-order GNNs.} 

The crude simplification of complex interaction into pairwise will inevitably result in information loss. 
Higher-order GNNs, as extensions of vanilla GNNs, can be classified into different types according to their application scenarios, and spectral-based simplicial GNNs are in the limelight of this paper.
The Hodge theory \cite{Hodge_Hatcher} enables us to describe diffusion across simplices conveniently.
Several simplicial GNNs, such as SNN \cite{SNN2020} and SCoNe \cite{SCoNe21Roddenberry}, simply replace the graph Laplacian with the Hodge $p$-Laplacian. 
SCNN \cite{SCNN2022} employs flexible simplicial filters to process edge signals from lower and upper simplicial neighbors, respectively. 
BScNet \cite{BScNets} is introduced by replacing the graph Laplacian with the block Hodge Laplacian. 
Nevertheless, the Hodge theory is inherently constrained to modeling interactions between simplices within one order difference.
As for spatial models, SGAT \cite{SGAT} constructs SCs from heterogeneous graphs and leverages upper adjacencies to pass messages between simplices.
MPSN \cite{SWL2021} is designed based on the simplicial WL-test with four types of adjacency relations. 
Generally, most simplicial GNNs can only leverage information from specific simplicial orders, missing the inherent advantages of SCs.
Besides, it is computationally expensive to find all simplices \cite{maxClique1999} and unnecessary to compute embeddings for redundant simplices in traditional tasks.

\begin{table}[!t]
\centering
\renewcommand\tabcolsep{5pt} 
\resizebox{0.48\textwidth}{!}{
\begin{tabular}{p{4.426cm}ccc}
\toprule
{\bf Model}   
&{\bf Convolution Filter}                                     
&{\bf Spectral}                           
&{\bf Learnable}\\
\midrule
GCN \cite{GCN} &$(1-\lambda)^K$ &Graph Laplacian  &\ding{55}\\
APPNP \cite{APPNP} &$\sum_{k=0}^K\frac{\gamma^k}{1-\gamma} (1-\lambda)^k$ &Graph Laplacian  &\ding{55}\\
GPRGNN \cite{GPRGNN}  &$\sum_{k=0}^{K}\gamma_k(1-\lambda)^k$ &Graph Laplacian &$\gamma_{k}$\\
ChebNet \cite{ChebNet} &$\sum_{k=0}^K\gamma_k \cos{\left(k\arccos{\left(1-\lambda\right)}\right)}$ &Graph Laplacian &$\gamma_k, K$ \\
SNN \cite{SNN2020}  &$\lambda^K$  &Hodge Laplacian &\ding{55}\\
SCoNe \cite{SCoNe21Roddenberry} &$\lambda^K_{down}, \lambda^K_{up}$ &Hodge Laplacian &\ding{55} \\
SCNN \cite{SCNN2022} &$\sum_{k=0}^{K_1}\gamma_{d,k}\lambda_{down}^k + \sum_{k=0}^{K_2}\gamma_{u,k}\lambda_{up}^k $ & Hodge Laplacian &\ding{55} \\
BScNets \cite{BScNets} &$f(\lambda_1,\lambda_2,\cdots,\lambda_P;\theta)^K$    &\makecell{Block Hodge\\Laplacian} &$\theta$ \\
\midrule
\textbf{HiGCN} (Ours)   &$\sum_{k=0}^{K}\gamma_{p,k}(1-\lambda_p)^k,$ $p=1,2,\cdots,P$     &FP Laplacian &$\gamma_{p,k}$\\
\bottomrule
\end{tabular}}
\caption{The filter forms of spectral GNNs.}
\label{tab:filter_form}
\end{table}

\section{Preliminaries}

The background knowledge required to present this work better is illustrated in this section.
Let $\mathcal{G}=\left(\mathcal{V},\mathcal{E}\right)$ denote an undirected pairwise graph with a finite node set $\mathcal{V}=\left\{v_1,\cdots,v_n\right\}$ and an edge set $\mathcal{E} \subseteq \mathcal{V}\times\mathcal{V}$. Assume that $|\mathcal{V}|=n, |\mathcal{E}|=n_1$, and $N(v)$ denotes the set of nodes adjacent to node $v$ in $\mathcal{G}$, i.e., $N(v)=\left\{u \in \mathcal{V} | (v,u)\in \mathcal{E}\right\}$.
The nodes are associated with a node feature matrix $X \in \mathbb{R}^{n\times d}$, where $d$ signifies the number of features per node.

\begin{definition}[Simplicial complexes, SCs]
A simplicial complex $\mathcal{K}$ is a finite collection of node subsets closed under the operation of taking nonempty subsets, and such a node subset $\sigma \in \mathcal{K}$ is called a simplex (as illustrated in Figure~\ref{fig:SCs}). 
\end{definition}

\begin{figure}[!t]
\centering
\includegraphics[width=\linewidth]{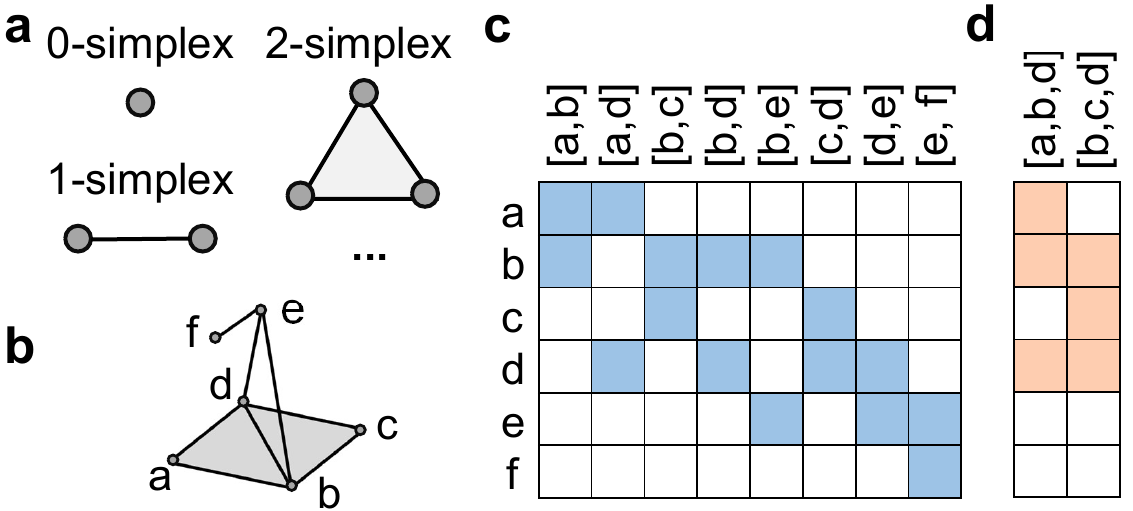}
\caption{a shows several typical simplices and its collection forms SCs in b. Subfigures c and d visualize the higher-order incidence matrices $\mathcal{H}_p$ for $p=1$ and $2$, respectively.}
\label{fig:SCs}
\end{figure}

SCs are a potent tool with a rich theoretical foundation upon algebraic and differential topology and geometry  \cite{HigherOrderReview2020}. 
Instead of predominantly studying pairwise interactions, SCs facilitate the modeling of higher-order interactions and multi-node graph structures.

A node subset $\sigma = \left[v_0,v_1,\cdots,v_p\right] \in \mathcal{K}$ with cardinality $p+1$ is referred to as a $p$-dimensional simplex, termed $p$-simplex, and we denote the set of all such $p$-simplices as $\mathcal{K}_p $ with $|\mathcal{K}_p| = n_p$.
One can regard vertices as $0$-simplices, edges as $1$-simplices, “filled” triangles as $2$-simplices, and so forth. 
A triangle $\left[v_1,v_2,v_3\right] \in \mathcal{K}$ implies that its nonempty subsets, namely $\left[v_1\right]$, $\left[v_2\right]$, $\left[v_3\right]$, $\left[v_1,v_2\right]$, $\left[v_1,v_3\right]$, and $\left[v_2,v_3\right]$, are also in $\mathcal{K}$.  
Pairwise graphs can be viewed as $1$-dimensional SCs, while higher-order SCs generally carry more structure information over pairwise graphs, which is critical and should not be omitted.

\textbf{Clique complex lifting transition}, as formally defined in Appendix B, extracts all cliques as simplices, converting pairwise graphs to SCs.
This transformation enables the study of pairwise graphs from simplicial perspectives.

The boundary relation describes which simplices lie on the boundary of other simplices. 
We say $\sigma$ is on the boundary of $\tau$, denoted as $\sigma \prec \tau$, iff $\sigma \subset \tau$ and $dim(\sigma) = dim(\tau)-1$.
%
For example, edges $\left[v_1,v_2\right]$, $\left[v_1,v_3\right]$, and $\left[v_2,v_3\right]$ lie on the boundary of the $2$-simplex $\left[v_1,v_2,v_3\right]$.

\textbf{Hasse diagram} is one of the most common representations of SCs, where each vertex corresponds to a simplex. 
The edges in the Hasse diagram are defined by the boundary relation, and there exists an edge connecting two vertices $\sigma_1$ and $\sigma_2$, iff $\sigma_1 \prec \sigma_2$.
The hasse diagram is highly expressive, and several simplicial GNNs \cite{SWL2021, GMPS22} are built precisely on the boundary relationships shown in the hasse diagram.

\section{Methodology}
We first introduce the higher-order flower-petals model for simplicial complex representation, which will subsequently be leveraged to construct our HiGCN model.

\subsection{Flower-Petals Model}
Hasse diagrams are valuable in studying SCs, but they are inherently constrained to modeling interactions for directly adjacent simplices and are computationally expensive to construct. 
Multiple transitions are required for information to pass between nodes and higher-order structures.
Besides, the number of total simplices grows exponentially with the number of nodes in dense graphs.
Computing embeddings for all higher-order structures can be costly and unnecessary for specific-level tasks. 
To address these challenges, we construct a novel higher-order representation, named the flower-petals (FP) model, and then introduce FP adjacency and Laplacian matrices based on the higher-order random walk dynamics between the flower core and petals.
%
 

It can be simplified only to consider the interaction between $0$-simplices and higher-order structures when tackling the most common tasks: node-level tasks.
Hence, we construct a flower-petals model by simplifying the intermediate vertices on the Hasse diagram. 
Specifically, the flower-petals model consists of one core and several petals, see Figure \ref{fig:fp_model}, with interactions considered only between the core and petals. 
$0$-simplices are placed in the flower core, and each flower petal involves simplices of the same order (larger than zero).
The term $p$-petal is used to represent the petal containing $p$-simplices.
Diverse and complex interactions exist between $p$-petal and the core, which can be unwrapped as a bipartite graph $\mathcal{G}_p$. 
Mathematically, the bipartite graph $\mathcal{G}_p$ consists of two distinct vertex sets $\left(\mathcal{V},\mathcal{K}_p\right)$, where $\mathcal{V}$ represents the set of nodes contained in the flower core and $\mathcal{K}_p$ comprised of simplices in the $p$-petal ($p \geq 1$). If simplex $\sigma (\in\mathcal{K}_p)$ contains node $v(\in\mathcal{V})$, then there exists an edge between their corresponding vertices in $\mathcal{G}_p$.
The proposed flower-petals model prunes the information interaction rules between petals but is still extremely expressive and useful. 

\begin{figure*}[!t]
\centering
\includegraphics[width=0.89\linewidth]{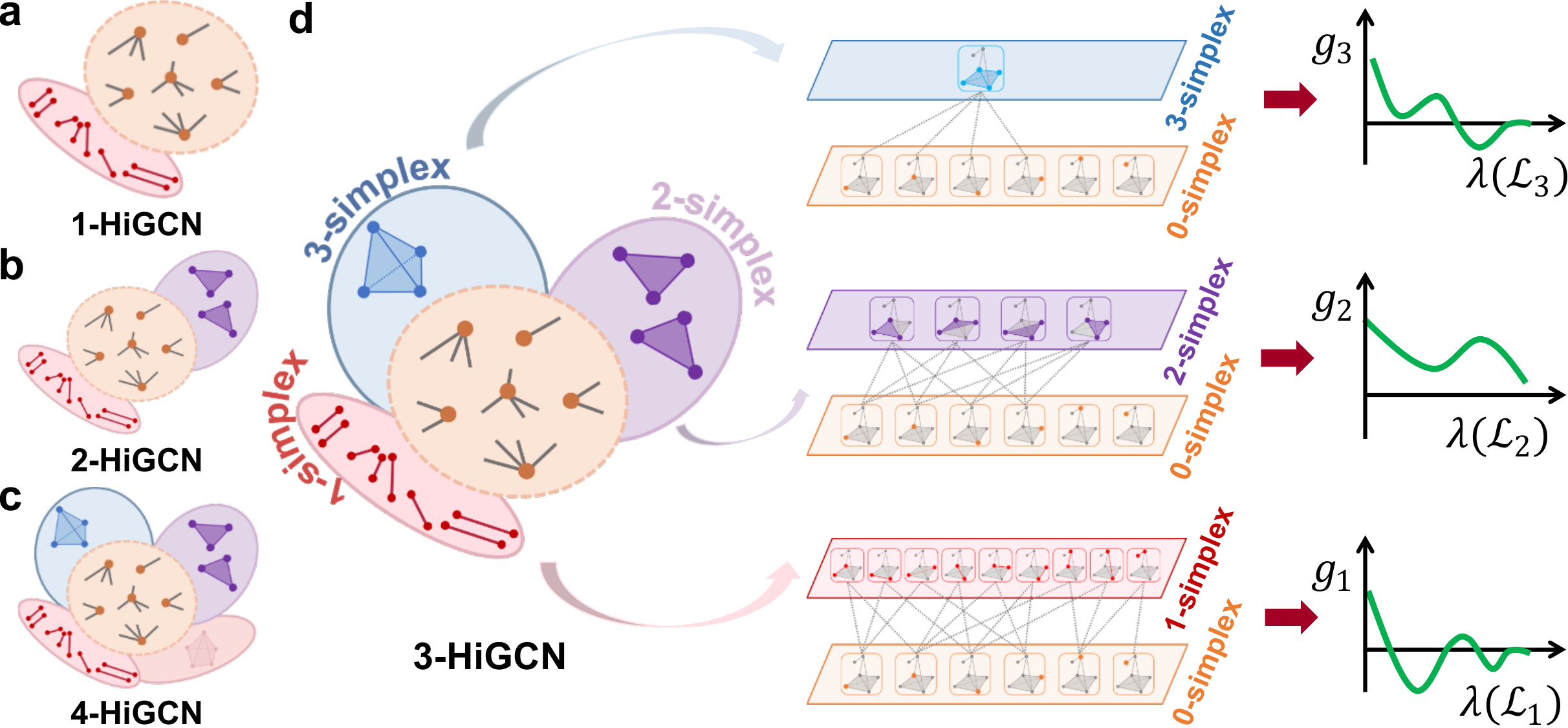}
\caption{Visualization of the ﬂower-petals model.
Different HiGCN models employ different numbers of petals, with each petal containing simplices of identical order. 
a-d visualizes 1,2,4,3-HiGCN, respectively.
The interaction between each petal and the flower core can be unwrapped as an individual bipartite graph. FP Laplacians are derived based on the random walk dynamics in the bipartite graphs, followed by various learnable convolution operations $g_p$ on each FP Laplacian basis.}
\label{fig:fp_model}
\end{figure*}

Inspired by incidence matrices in pairwise networks, we introduce higher-order incidence matrix $\mathcal{H}_p \in \mathbb{R}^{|\mathcal{V}| \times |\mathcal{K}_p| }$ to describe the association between vertices in the core and $p$-simplices in the $p$-petal, with entry $\mathcal{H}_p(v,\sigma)=1$ indicating the vertex $v$ is contained in the simplex $\sigma (\in \mathcal{K}_p)$. 
Visual representations are provided in Figure \ref{fig:SCs} \textbf{c} and \textbf{d} for clarity.

\subsection{Flower-Petals Algebraic Description} 

Hodge Laplacian \cite{hodge_schaub2020random, Hodge_Hatcher} is a fundamental tool in simplicial complexes.
However, it can only describe interactions between simplices within one-order differences. 
To model interactions between different order simplices more flexibly, we introduce novel matrical descriptions for simplicial complexes based on the random walk dynamics between the flower core and petals.

The main idea of random walks is to traverse a graph starting from a single node or a set of nodes and get sequences of locations \cite{HoRW}. 
We introduce the traditional random walk model in Appendix D.
Walking on the bipartite graphs $\mathcal{G}_p$ consists of two sub-steps: ($\mathrm{\uppercase\expandafter{\romannumeral 1}}$) upward walk and ($\mathrm{\uppercase\expandafter{\romannumeral 2}}$) downward walk.

The upward walk refers to the walk from nodes in the flower core to their corresponding simplices in the $p$-petal, while the downward walk proceeds in the opposite direction.
Consider $\pi(t) = \left(\pi_{v_1}(t),\cdots,\pi_{v_n}(t)\right)^\top$, whose item $\pi_\sigma(t)$ encodes the probability for simplex $\sigma$ to be occupied by a random walker at step $t$.
In the upward walk process, information is transmitted from nodes to simplices and the probability of moving from vertex $u$ to simplex $\sigma$ is equal to $\mathcal{H}_p(u,\sigma)/d_p(u)$. 
The downward walk, i.e., petals-to-core walk, allows information to be transferred from simplices back to nodes. These two processes follow that
$\pi_\sigma(t-1) = \sum_{u} {d_p(u)^{-1}}{\mathcal{H}_p(u,\sigma)}  \pi_u(t-2),$
and 
$\pi_v(t) = \sum_{\sigma} {\delta_p(\sigma)^{-1}} {\mathcal{H}_p(v, \sigma)} \pi_\sigma(t-1)$, 
for upward and downward walks, respectively.
Here, $d_p(u) = \sum_{\sigma \in \mathcal{K}_p} \mathcal{H}_p(u,\sigma)$ denotes the degree of $u$ on $\mathcal{G}_p$, and $\delta_p(\sigma) = \sum_{v \in \mathcal{V} }\mathcal{H}_p(v,\sigma) = p+1$ represents the degree of $p$-simplex $\sigma$ on  $\mathcal{G}_p (p \geq 1)$.

The two-step walk \cite{ISMnet} integrates both the upward and downward walks, allowing the information to be transmitted from the flower core and back through the petals. 
A complete two-step walk process follows that
\begin{equation}
    \pi_v(t) = \sum_{\sigma} \frac{\mathcal{H}_p(v,\sigma)}{\delta_p(\sigma)} \sum_{u} \frac{\mathcal{H}_p(u,\sigma)}{d_p(u)} \pi_u(t-2).
\label{equ:wander}
\end{equation}

We can further derive the matrix representation for the two-step walk as 
$\pi(t)=\mathcal{H}_p D_{p,h}^{-1} \mathcal{H}_p^\top D_{p,v}^{-1}\pi(t-2),$
where $D_{p,v}=\operatorname{diag}\left(d_p(v_1),\cdots,d_p(v_n)\right)$ and $D_{p,h}=\operatorname{diag}\left(\delta_p(\sigma_1),\cdots,\delta_p(\sigma_{|\mathcal{K}|})\right)$$=(p+1)I$.
%
By multiplying $D_{p,v}^{-1/2}$ from the left sides of this equation, we can obtain
\begin{equation}
    D_{p,v}^{-1/2}\pi(t) =\left[D_{p,v}^{-1/2} \mathcal{H}_p  D_{p,h}^{-1}   \mathcal{H}_p ^\top  D_{p,v}^{-1/2}\right]D_{p,v}^{-1/2} \pi(t-2).
\end{equation}

Therefore, based on the two-step walk dynamic between the flower core and petals, we can define higher-order flower-petals (FP) adjacency matrices analogously to the reduced adjacency matrices (see Appendix D) as
\begin{equation}
    \begin{split}
        \tilde{\mathcal{A}_p} 
     &= D_{p,v}^{-1/2}\mathcal{H}_p D_{p,h}^{-1} {\mathcal{H}_p}^\top D_{p,v}^{-1/2}\\
     &= \frac{1}{p+1}D_{p,v}^{-1/2}\mathcal{H}_p{\mathcal{H}_p}^\top D_{p,v}^{-1/2}. 
    \end{split}
\end{equation}

The Laplacian operator is crucial for the processing of relational data, and it bears resemblance to the Laplace-Beltrami operator in differential geometry.
On the basis of the FP adjacency matrices, we can likewise define a series of higher-order FP Laplacian operators as
$
 \mathcal{L}_p = I - \tilde{\mathcal{A}_p}.
$

\begin{theorem}
\label{theorem: semi-positive}
The flower-petals adjacency matrices $\tilde{\mathcal{A}}_p$ and flower-petals Laplacian matrices $\mathcal{L}_p$ are all symmetric positive semidefinite.
\end{theorem}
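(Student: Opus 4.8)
The plan is to dispatch symmetry and the lower spectral bound with routine quadratic-form computations, and then to concentrate the real effort on the upper spectral bound that $\mathcal{L}_p \succeq 0$ requires.

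First I would treat $\tilde{\mathcal{A}}_p$. Writing $B = D_{p,v}^{-1/2}\mathcal{H}_p$, the closed form derived above reads $\tilde{\mathcal{A}}_p = \frac{1}{p+1} B B^\top$. Since $D_{p,v}^{-1/2}$ is diagonal, hence symmetric, transposing gives $\tilde{\mathcal{A}}_p^\top = \frac{1}{p+1}(B B^\top)^\top = \frac{1}{p+1} B B^\top = \tilde{\mathcal{A}}_p$, so $\tilde{\mathcal{A}}_p$ is symmetric. For positive semidefiniteness, for any $x \in \mathbb{R}^n$ set $y = B^\top x = {\mathcal{H}_p}^\top D_{p,v}^{-1/2} x$; then $x^\top \tilde{\mathcal{A}}_p x = \frac{1}{p+1}\|y\|^2 \geq 0$ because $p+1 > 0$. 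The symmetry of $\mathcal{L}_p = I - \tilde{\mathcal{A}}_p$ is then immediate, as $I$ and $\tilde{\mathcal{A}}_p$ are both symmetric.

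The substantive step is establishing $\mathcal{L}_p \succeq 0$, equivalently $x^\top \tilde{\mathcal{A}}_p x \leq x^\top x$ for all $x$. I would substitute $z = D_{p,v}^{-1/2} x$, reducing the claim to $\frac{1}{p+1}\|{\mathcal{H}_p}^\top z\|^2 \leq z^\top D_{p,v} z$. Expanding the left-hand side as $\sum_\sigma \bigl(\sum_{v \in \sigma} z_v\bigr)^2$ and applying Cauchy--Schwarz to each simplex $\sigma$, which contains exactly $p+1$ vertices (precisely the fact $\delta_p(\sigma) = p+1$ used above), yields $\bigl(\sum_{v \in \sigma} z_v\bigr)^2 \leq (p+1)\sum_{v \in \sigma} z_v^2$. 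Summing over $\sigma$ and interchanging the order of summation, the multiplicity with which each vertex $v$ appears is its degree $d_p(v)$, so the bound collapses to exactly $z^\top D_{p,v} z = x^\top x$. Hence all eigenvalues of $\tilde{\mathcal{A}}_p$ lie in $[0,1]$ and $\mathcal{L}_p$ is positive semidefinite. Alternatively, and more in keeping with the random-walk framing, I would note that $P = D_{p,v}^{-1}\mathcal{H}_p D_{p,h}^{-1}{\mathcal{H}_p}^\top$ is a product of two row-stochastic matrices (the upward and downward walk operators), hence itself row-stochastic with all eigenvalues of modulus at most $1$; since $\tilde{\mathcal{A}}_p = D_{p,v}^{1/2} P D_{p,v}^{-1/2}$ is similar to $P$ and symmetric, its spectrum is real and therefore contained in $[0,1]$, again forcing $\mathcal{L}_p \succeq 0$.

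The main obstacle is the upper spectral bound for $\mathcal{L}_p$; everything else (symmetry and the lower bound $\tilde{\mathcal{A}}_p \succeq 0$) is a one-line quadratic-form argument. The Cauchy--Schwarz route is the cleanest and fully self-contained, whereas the stochasticity route has the advantage of tying the bound directly to the two-step random-walk dynamics that motivated the construction.
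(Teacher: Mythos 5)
Your proposal is correct and its main line of argument coincides with the paper's: the per-simplex Cauchy--Schwarz bound $\bigl(\sum_{v\in\sigma}z_v\bigr)^2 \le (p+1)\sum_{v\in\sigma}z_v^2$, summed over simplices with multiplicity $d_p(v)$, is exactly the paper's decomposition $(p+1)D_{p,v}-\mathcal{H}_p\mathcal{H}_p^\top=\sum_{\sigma_p\in\mathcal{K}_p}G_{\sigma_p}$ into positive semidefinite per-simplex blocks, and the quadratic-form argument for $\tilde{\mathcal{A}}_p\succeq 0$ is identical. Your secondary route via similarity to the row-stochastic two-step walk operator is a valid alternative the paper does not take, but it is not needed.
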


It follows from Theorem \ref{theorem: semi-positive} that $0 \leq \lambda(\tilde{\mathcal{A}}_p), \lambda\left(\mathcal{L}_p\right)\leq 1$.
We defer the proof and further theoretical analysis of the spectral properties to Appendix A.
Theorem \ref{theorem: semi-positive} contributes to alleviate the numerical instability and exploding/vanishing gradients that may arise in the implementation of deep GNNs based on the FP Laplacians.
The diverse FP Laplacian matrices capture the various connectivity relations of the simplicial complexes, where we can learn a series of diverse spectral convolution operators.

\subsection{Higher-Order Graph Convolutional Network}

The eigen decomposition $\mathcal{L} =\Phi \Lambda \Phi^{\top}$ can be applied to the Laplacian matrix to obtain orthonormal eigenvectors $\Phi=\left(\phi_{1}, \phi_{2} \cdots, \phi_{n}\right)$ and a diagonal matrix $\Lambda = \operatorname{diag}\left(\lambda_{1},\lambda_{2},\cdots,\lambda_{n}\right)$. 
Then, for a graph signal $x$, the graph Fourier transform is defined as $\Phi^{\top} x$, where the eigenvectors act as the Fourier bases and the eigenvalues are interpreted as frequencies. The spectral convolution of signal $x$ and filter $g$ can then be formulated as
\begin{equation}
    g \star x =\Phi \left(\left( \Phi^{\top} g\right) \odot \left(\Phi^{\top} x\right)\right) =\Phi g(\Lambda) \Phi^{\top} x.
\end{equation}
Here, operator $\odot$ presents the Hadamard product, and the filter $g(\Lambda)$ applies $g$ element-wisely to the diagonal entries of $\Lambda$, i.e., $g(\Lambda)= \operatorname{diag}$ $  \left(g(\lambda_1), \cdots, g(\lambda_n)\right)$.
Note that spectral decomposition for large-scale networks can be computationally expensive. Therefore, one can approximate any graph filter using a polynomial filter with enough terms \cite{graphFourier2013}. 
Consequently, the filter $g$ is usually set to be a truncated polynomial $g(\lambda)=\sum_{k=0}^K \gamma_k \lambda^k$ of order $K$.
In this way, spectral decomposition is avoided.

We derive various FP Laplacian matrices based on the FP model, each representing different connectivity relations within SCs. 
Subsequently, we define different convolution operations on each FP Laplacian basis as
\begin{equation}
    g \star_p x = g_p(\mathcal{L}_p)x,
\end{equation}
where graph filter $g_p(\mathcal{L}_p)=\sum_{k=0}^K \gamma_{p,k} \mathcal{L}_p^k$ is composed of different learnable polynomial functions in each FP spectral domain. 
These learnable coefficients $\gamma_{p,k}$ capture the contributions of different hop neighbors in each order. 
$K$ is a hyperparameter denoting the largest hops of the simplices under consideration. 
%
We summarize some prevalent filter forms in Table \ref{tab:filter_form}.
%
When processing a simplicial signal $X\in\mathbb{R}^{n\times d}$ with $d$ dimensional features, a more general form of spectral GNNs follows that 
$Y = \rho \left( g(\mathcal{L})\varphi(X)\right)$.
Here, $\rho$ and $\varphi$ are permutation-invariant functions. 

To encode multi-scale higher-order information, the final prediction is obtained by concatenating results from different convolution operations as
\begin{equation}
    Y_p = g_p(\mathcal{L}_p)\varphi_p(X),
    \quad
    Y = \rho \left( {\big|\!\big|}_{p=1}^P Y_p\right).
\label{equ:general_HiSCN}
\end{equation}
Here, $\|$ concatenates the representation in different spectral domains. 
Besides, we simplify $\rho$ and $\varphi_p$ to linear functions as suggested by \cite{LinearGNN}, resulting in
\begin{equation}
    Y =   \mathop{\big|\!\big|}\limits_{p=1}^P \left(\sum_{k=0}^K \gamma_{p,k} \tilde{\mathcal{A}_p^k} X \Theta_p \right) W.
\end{equation}
Here, $\gamma_{p,k}, \Theta_p $, and $W$ are trainable parameters, and $P$ is a hyperparameter indicating the highest order of the simplices under consideration. 
The model under $P=\ell$ is denoted as $\ell$-HiGCN.
%
%
%
%
%
Notably, the training process can be accelerated by precalculating $\tilde{\mathcal{A}_p^k}$, which can be efficiently calculated between sparse matrices.

HiGCN facilitates the independent and flexible learning of filter shapes across disparate FP spectral domains rather than predetermining filter configurations. 
Consequently, it is adept at handling both high/low frequency and high/low order signal components in a versatile manner. 
Furthermore, we find that the filters' weights in different orders quantify the strength of the higher-order interactions, contributing to the understanding of higher-order mechanisms inherent within complex systems.
Now, we proceed to elucidate the advantages of HiGCN from various perspectives.


\paragraph{Expressive power.}

We have developed the HiGCN model from a spectral perspective. 
The WL test provides a well-studied framework for unique node labeling, and an intrinsic theoretical connection has been uncovered between the WL test and message-passing-based GNNs \cite{GIN2019}. 
We extend this relation and propose a higher-order WL test, termed HWL, along with its simplified version SHWL. 
Detailed procedures for WL, HWL, and SHWL are elaborated in Appendix B.
Furthermore, we revisit the HiGCN model from the message-passing perspective in Appendix  B, offering an alternative interpretation that underscores the exceptional expressive power of our model.
%

%

\begin{theorem}
\label{the:SHWL}
SHWL with clique complex lifting is strictly more powerful than Weisfeiler-Lehman (WL) test.
\end{theorem}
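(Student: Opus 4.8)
The plan is to establish the two directions separately: first that SHWL with clique complex lifting is at least as powerful as the WL test, and then that the containment is strict by exhibiting a pair of graphs that WL cannot separate but SHWL can. This mirrors the standard template for expressiveness comparisons, where one shows a refinement relation between colorings in one direction and produces an explicit separating witness in the other.

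For the ``at least as powerful'' direction, I would show that the SHWL coloring refines the WL coloring on the $0$-simplices. Under clique complex lifting every node becomes a $0$-simplex and every edge a $1$-simplex, so two $0$-simplices are upper-adjacent in the lifted complex exactly when the corresponding nodes are adjacent in $\mathcal{G}$. I would proceed by induction on the number of refinement rounds $t$, proving that if two nodes receive the same SHWL color at round $t$ then they receive the same WL color at round $t$ (equivalently, the SHWL partition of $\mathcal{V}$ is at least as fine as the WL partition). The base case is immediate, since both colorings start from identical initial labels. For the inductive step, the WL update aggregates the multiset of neighbor colors through edges; because this aggregation is recoverable from the upper-adjacency information that SHWL already propagates among $0$-simplices, any distinction WL makes is also made by SHWL. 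Hence whenever WL separates two graphs through their final color histograms, SHWL separates them too.

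For strictness, I would supply an explicit witness: let $G_1$ be the disjoint union of two triangles and let $G_2$ be the $6$-cycle $C_6$. Both are $2$-regular on six vertices, and a short computation shows that $1$-WL yields the identical stable coloring on the two graphs (every vertex remains in a single color class throughout), so WL cannot distinguish them. Under clique complex lifting, however, each triangle of $G_1$ contributes a filled $2$-simplex, whereas $C_6$ is triangle-free and so its lifted complex contains no $2$-simplices. Since SHWL incorporates the colors and counts of higher-order simplices into its node refinement, the presence versus absence of $2$-simplices forces the resulting SHWL histograms to differ, so SHWL separates $G_1$ and $G_2$. Combining the two directions gives that SHWL with clique complex lifting is strictly more powerful than WL, as claimed in Theorem~\ref{the:SHWL}.

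The main obstacle I anticipate lies in the refinement direction. I must verify carefully that the \emph{simplified} SHWL update genuinely retains enough of the boundary and upper-adjacency information to simulate the WL neighbor aggregation, rather than discarding it in the course of simplification. Concretely, I would need to confirm from the SHWL definition that the message a $0$-simplex receives determines the multiset of its neighbors' colors; once that point is pinned down, the induction becomes routine and the separating example makes strictness immediate.
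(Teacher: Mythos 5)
Your proposal follows the same two-part strategy as the paper: a refinement induction showing SHWL is at least as powerful as WL (the paper's Lemma on SHWL), and an explicit WL-indistinguishable pair that becomes separable after clique complex lifting (the paper's witness in its figure is of exactly your type: triangle-containing versus triangle-free graphs with identical WL colorings). Two points in your sketch need repair or completion before it matches a full proof. First, your induction invariant is stated round-for-round, but in the flower-petals model $0$-simplices never see one another directly: a node's update aggregates the colors of its incident edges, and only on the \emph{next} round do those edge colors reflect the neighboring nodes' colors. The correct invariant is therefore $b^{(2t)} \sqsubseteq a^{(t)}$ (two SHWL rounds per WL round), as in the paper; the round-for-round version fails already at $t=2$, where WL has seen neighbor degrees while SHWL has so far only received uniform edge colors. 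Second, the obstacle you flag at the end --- whether the simplified update still lets a node recover the multiset of its neighbors' colors --- is precisely the content of the lemma and cannot be deferred: the paper resolves it by keeping the node-level update an injective $\operatorname{Hash}$ (so the multiset of incident edge colors is recovered exactly) and by requiring the edge-level linear map to be nonvanishing, so that after cancelling the shared contributions of the edge's own color and the central node's color, each edge color injectively determines the far endpoint's color. With those two adjustments your argument coincides with the paper's proof, and your strictness witness goes through as you describe.
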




The proposed model can be interpreted as a neural version of the SHWL test where colors are replaced by continuous feature vectors.
Hence, Theorem \ref{the:SHWL} implies that HiGCN endows with greater expressive power than vanilla GNNs. 
See Appendix B for proof and detailed discussion.

\paragraph{Relation to existing models.}
HiGCN shows superiority over pairwise graph-based GCNs for exploiting higher-order information, and it generalizes spectral convolution operations on pairwise graphs, including GCN \cite{GCN} and GPRGNN \cite{GPRGNN}.
On the other hand, HiGCN exhibits greater flexibility than certain Hodge Laplacian-based simplicial GCNs, such as SNN \cite{SNN2020} and SCNN \cite{SCNN2022}, overcoming the constraints of information exchange exclusively through boundary operators.
Further derivation and discussion are presented in Appendix C.


\paragraph{Symmetries.}
It is a fundamental concept for understanding GNNs and their behavior.
HiGCN has been demonstrated to exhibit equivariance with respect to relabeling of simplices, enabling it to exploit symmetries in SCs. 
Formal proofs and detailed discussions are deferred to Appendix E.

\paragraph{Computational complexity.}
A balance between performance and complexity can be achieved by limiting the number of petals $P$. 
We find that a small $P$ is typically adequate, and considering more petals may result in diminishing marginal utility.
Generally, the computational complexity of HiGCN is comparable to that of spectral GNNs. 
We report the average training time per epoch and average total running time in Appendix G, demonstrating that HiGCN achieves competitive performance with a reasonable computational cost.
Additionally, when the targeted graph is not in the form of SCs, one should also consider the one-time preprocessing procedure for graph lifting, see Appendix G for details. 
\section{Experiments}
In this section, we evaluate HiGCN on three tasks: node/ graph classification and simplicial data imputation.
Detailed data introduction and experimental settings are deferred to Appendices H and I, respectively.

\subsection{Node Classification on Empirical Datasets}

\begin{table*}[!ht] 
\renewcommand\tabcolsep{0.9pt} 
\resizebox{\textwidth}{!}{
\begin{tabular}{cccccccccccc}
\toprule
Method    &Cora   &Citeseer   &PubMed &Computers  &Photo  &Chameleon  &Actor &Squirrel   &Texas  &Wisconsin\\
\midrule
MLP
&76.96\spm{0.95}  &76.58\spm{0.88}  &85.94\spm{0.22}  &82.85\spm{0.38}  &84.72\spm{0.34}  &46.85\spm{1.51}  &40.19\spm{0.56}  &31.03\spm{1.18}  &91.45\spm{1.14}  &93.56\spm{0.87}\\
GAT      
&88.03\spm{0.79}  &80.52\spm{0.71}  &87.04\spm{0.24}  &83.33\spm{0.38}  &90.94\spm{0.68}  &63.90\spm{0.46}  &35.98\spm{0.23}  &42.72\spm{0.33}  &78.87\spm{0.86}  &65.64\spm{1.74}\\
ChebNet  
&86.67\spm{0.82}  &79.11\spm{0.75}  &87.95\spm{0.28}  &87.54\spm{0.43}  &93.77\spm{0.32}  &59.96\spm{0.51}  &38.02\spm{0.23}  &40.67\spm{0.31}  &86.08\spm{0.96}  &90.57\spm{0.91}\\
BernNet  
&88.52\spm{0.95}  &80.09\spm{0.79}  &88.48\spm{0.41}  &87.64\spm{0.44}  &93.63\spm{0.35}  &\s{68.29}\spm{1.58}  &\s{41.79}\spm{1.91}  &\s{51.35}\spm{0.73}  &\f{93.12}\spm{0.65} &91.82\spm{0.38}\\
GGCN  
&87.68\spm{1.26}  &77.08\spm{1.32}  &89.63\spm{0.46}  &N/A  &89.92\spm{0.97}  &62.72\spm{2.05}  &38.09\spm{0.88}  &49.86\spm{1.55}  &85.81\spm{1.72} &87.65\spm{1.50}\\
APPNP  
&88.14\spm{0.73}  &80.47\spm{0.74}  &88.12\spm{0.31}  &85.32\spm{0.37}  &88.51\spm{0.31}  &51.89\spm{1.82}  &39.66\spm{0.55}  &34.71\spm{0.57}  &90.98\spm{1.64}  &64.59\spm{0.97}\\
GPRGNN 
&88.57\spm{0.69}  &80.12\spm{0.83}  &88.46\spm{0.33}  &86.85\spm{0.25}  &93.85\spm{0.28}  &\thi{67.28}\spm{1.09}  &39.92\spm{0.67}  &50.15\spm{1.92}  &\s{92.95}\spm{1.31}  &88.54\spm{1.37}\\
\midrule
k-S2V 
&68.30\spm{0.35} & 44.22\spm{0.44}  &67.21\spm{0.41} &84.15\spm{0.11} &89.08\spm{0.65}  
&49.00\spm{0.82} &N/A &39.15\spm{0.49}  &85.12\spm{0.98}  &87.44\spm{0.77} \\  
S2V
&80.15\spm{0.88}  &78.21\spm{0.34}  &85.48\spm{0.33} &83.25\spm{0.50} &84.33\spm{0.19} &47.14\spm{0.32}  &39.22\spm{0.50}  &40.26\spm{0.74}  &82.12\spm{0.23} &83.48\spm{0.89}\\
SNN     
&87.13\spm{1.02}  &79.87\spm{0.68}  &86.73\spm{0.28}  &83.33\spm{0.32}  &88.27\spm{0.74}  &60.96\spm{0.78}  &30.59\spm{0.23}  &45.66\spm{0.39}  &75.16\spm{0.96}  &61.93\spm{0.83}\\
SGAT 
&77.49\spm{0.79}  &78.93\spm{0.63} &88.10\spm{0.59} &N/A           &N/A               &51.23\spm{0.36}  &36.71\spm{0.49}  &N/A            &89.83\spm{0.66}  &81.47\spm{0.64}\\
SGATEF 
&78.12\spm{0.85}  &79.16\spm{0.72} &88.47\spm{0.62} &N/A           &N/A               &51.61\spm{0.40}  &37.33\spm{0.58}  &N/A            &89.67\spm{0.74}  &81.59\spm{0.81}\\
1-HiGCN
&\thi{88.96}\spm{0.28}  &\s{80.96}\spm{0.27}  &\thi{89.83}\spm{0.73}  &\thi{90.50}\spm{0.52}  &\s{95.22}\spm{0.30} &63.55\spm{0.84}  &\thi{41.57}\spm{0.27}  &49.13\spm{0.33}  &90.36\spm{0.78} &\thi{94.39}\spm{0.94}\\
2-HiGCN
&\f{89.23}\spm{0.23}  &\f{81.12}\spm{0.28}  &\s{89.89}\spm{0.16}  &\f{90.76}\spm{0.27}  &\f{95.33}\spm{0.37}  &\f{68.47}\spm{0.45}  &\f{41.81}\spm{0.52}  &\f{51.86}\spm{0.42}  &\thi{92.15}\spm{0.73}      &\s{94.69}\spm{0.95}\\
3-HiGCN
&\s{89.00}\spm{0.26}  &\thi{80.90}\spm{0.22}  &89.73\spm{0.17} &\s{90.65}\spm{0.20} &\thi{94.40}\spm{0.31} &67.12\spm{0.32} &41.29\spm{0.20} &\thi{50.92}\spm{0.34}  &91.85\spm{0.62}          &94.12\spm{0.68}\\
4-HiGCN
&88.63\spm{0.28}  &80.47\spm{0.31}  &\f{89.95}\spm{0.13} &90.35\spm{0.31} &94.10\spm{0.24} &66.98\spm{0.23} &41.13\spm{0.24} &50.45\spm{0.21}  &91.42\spm{0.75}  &\f{94.89}\spm{0.65}\\
\bottomrule
\end{tabular}}
\caption{Node classification results on empirical benchmark networks: mean accuracy $(\%)\pm 95\%$ confidence interval. The best results are in bold, while the second-best ones are
underlined.}
\label{tab:node_classify}
\end{table*}

We perform the node classification task employing five homogeneous graphs, encompassing three citation graphs - Cora, CiteSeer, PubMed \cite{Yang2016CiteGraph} - and two Amazon co-purchase graphs, Computers and Photo \cite{Shchur2018}. 
Additionally, we include five heterogeneous graphs, namely Wikipedia graphs Chameleon and Squirrel \cite{Rozemberczki2021}, the Actor co-occurrence graph, and the webpage graphs Texas and Wisconsin from WebKB \cite{Pei2020Geom-GCN}.
Adjacent nodes in homogeneous graphs tend to share the same label, while the opposite holds in heterogeneous graphs. 
The clique complex lifting transition is carried out on each graph.

We compare HiGCN with various baseline models including MLP, pairwise GNNs (GAT \cite{GAT2018}, ChebNet \cite{ChebNet}, BernNet \cite{BernNet}, GGCN \cite{GGCN}, APPNP \cite{APPNP}, GPRGNN \cite{GPRGNN}), and higher-order models (S2V \cite{Simplex2vec}, k-S2V \cite{k-simplex2vec}, SNN \cite{SNN2020}, SGAT, SGATEF \cite{SGAT}). 
We randomly partition the node set into train/validation/test subsets with a ratio of 60\%/20\%/20\%, and repeat the experiments 100 times. 
The mean classification accuracies on the test nodes are reported in Table \ref{tab:node_classify}.

It can be drawn from Table \ref{tab:node_classify} that HiGCN achieves the best results in 9 out of the 10 graphs. 
On the remaining dataset, HiGCN also displays comparable performance to the SOTA methods.
Generally, 2-HiGCN and 3-HiGCN outperform 1-HiGCN, suggesting the value of higher-order information in graph learning.
However, it is elusive to find that performance did not consistently increase with the inclusion of more higher-order interactions.
One possible explanation is that introducing more higher-order interactions might make the training process more complex and challenging. 
If the model lacks sufficient training data or appropriate training strategies, it may struggle to effectively harness these higher-order interactions.
Furthermore, HiGCN shows on average a greater lead on homogeneous graphs, consistent with the intuition that higher-order effects tend to manifest on homogeneous graphs \cite{HigherOrderReview2020}.
%

In addition, we scale to three larger datasets: Ogbn-arxiv and Genius (homogeneous graphs) and Penn94 (heterogeneous graph). 
The results in Appendix G highlight HiGCN's superior performance and robust scalability.

\textbf{Quantifying higher-order strength.}
The filter weights $\gamma_{p,k}$ captures the influence of $p$-simplex on $k$-hop neighbors; thus, we quantify the $p$-order interaction strength in terms of 
\begin{equation}
  \mathcal{S}_p = \sum\nolimits_k |\gamma_{p,k}|.  
\end{equation}
To gain insight, we visualize $\mathcal{S}_p$ with order $p=1,2,3,4$ on both homogeneous (Cora, Photo) and heterogeneous (Actor, Texas) graphs in Figure \ref{fig:stack5} \textbf{a}-\textbf{d}.
We observe that $\mathcal{S}_p$ decreases gently with the increase of $p$ in homogeneous graphs, while it decreases rapidly in heterogeneous graphs. 
This observation implies that the strength of higher-order effects varies at different orders and across different types of graphs. 

\begin{figure}[!t]
\centering  
\includegraphics[width=0.86\linewidth]{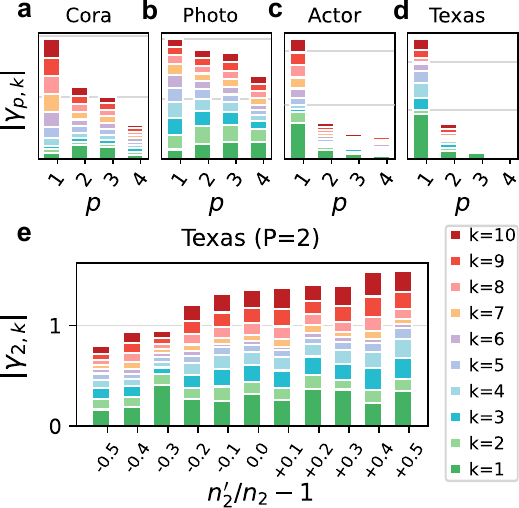}
\caption{a, b, c and d visualize the stack of learned weights $|\gamma_{p,k}|$ under order $p=1,2,3,4 (P=4)$. e visualizes the stack of $|\gamma_{2,k}|$ for Texas under various relative densities $\rho_2$.}
\label{fig:stack5}
\end{figure}

%
We observe that graphs with fewer higher-order structures tend to exhibit a smaller $\mathcal{S}_p$, potentially degrading HiGCN's performance.
For instance, in Texas, the only dataset where HiGCN's performance is not optimal, we note significantly weaker higher-order interactions compared to lower-order ones (see Figure \ref{fig:stack5}\textbf{d}), and it has the fewest triangles among all datasets (see Table 8).
To verify this conjecture, we manipulate the number of higher-order structures by adjusting the edge connectivity while maintaining the degree distribution as done in 1k null models \cite{zeng2023hyper}, see Appendix F for details. 
We define the relative higher-order density for the modified networks as
$\rho_p = {n_p^\prime}/{n_p}-1$,
where $n_p$ and $n_p^\prime$ denote the number of $p$-simplex in the original and the modified network, respectively.
Figure \ref{fig:stack5}\textbf{e} visualizes $\mathcal{S}_p$ under different $\rho_2$ for Texas, showing an upward trend as the triangle density $\rho_2$ increases.
Table 5 also reveals an increasing accuracy rank of HiGCN with the rise of $\rho_2$.
Hence, $\mathcal{S}_p$ can serve as a  quantification of $p$-order interaction strength. More results and discussions are deferred to Appendix F.

\subsection{Graph Classification on TUD Benchmarks}
To verify the broad applicability of the proposed model, we also evaluate the graph classification performance of HiGCN using various datasets from diverse domains, which are categorized into two main groups: bioinformatics datasets (i.e., PROTEINS \cite{data:proteins2005}, MUTAG \cite{data:MUTAG}, PTC \cite{data:PTC}) and social network datasets (i.e., IMDB-B, IMDB-M \cite{data:IMDB-REDDIT}).
To obtain a global embedding for each graph, we apply readout operations by performing averaging or summation. 
%
Following the standard pipeline in \cite{GIN2019}, we conduct a 10-fold cross-validation procedure and report the maximum average validation accuracy across folds.
The performance of HiGCN is presented in Table \ref{tab:TUD}, alongside the results for kernel methods (RWK \cite{RWK2003}, GK \cite{GK_2009efficient}, PK \cite{PK_2016neumann}, WL kernel \cite{WL_kernel_2011shervashidze}), pairwise GNNs (DCNN \cite{DCNN_atwood2016}, DGCNN \cite{DGCNN_zhang2018}, IGN \cite{IGN_2018}, GIN \cite{GIN2019}, PPGNs \cite{PPGNs_maron2019}, Natural GN \cite{natural_GN_2020}), and the higher-order model MPSN \cite{SWL2021}.

Our model exhibits superior performance compared to these baselines, demonstrating strong empirical results across all benchmark datasets. 
Additionally, HiGCN achieves its optimal outcomes on the two social network datasets, coinciding with the finding that simplices play a pivotal role in social networks \cite{battiston2021physics}.

\begin{table}[!t]
\centering
\renewcommand\tabcolsep{0.8pt} 
\resizebox{0.48\textwidth}{!}{
\begin{tabular}{cccccc}
\toprule
Dataset    & PROTEINS     & MUTAG                 & PTC               & IMDB-B               & IMDB-M        \\ \midrule
RWK        & 59.6\spm{0.1} & 79.2\spm{2.1}        & 55.9\spm{0.3}      & N/A                 & N/A           \\
GK (k=3)   & 71.4\spm{0.3} & 81.4\spm{1.7}        & 55.7\spm{0.5}      & N/A                 & N/A           \\
PK         & 73.7\spm{0.7} & 76.0\spm{2.7}        & 59.5\spm{2.4}      & N/A                 & N/A           \\
WL kernel  & 75.0\spm{3.1} & 90.4\spm{5.7}  & 59.9\spm{4.3}      & 73.8\spm{3.9}       & 50.9\spm{3.8}  \\
DCNN       & 61.3\spm{1.6} & N/A                  & N/A                & 49.1\spm{1.4}       & 33.5\spm{1.4}  \\
DGCNN      & 75.5\spm{0.9} & 85.8\spm{1.8}        & 58.6\spm{2.5}      & 70.0\spm{10.9}      & 47.8\spm{10.9} \\
IGN        & 76.6\spm{5.5} & 83.9\spm{13.0}       & 58.5\spm{6.9}      & 72.0\spm{5.5}       & 48.7\spm{3.4}  \\
GIN        & 76.2\spm{2.8} & 89.4\spm{5.6}        & 64.6\spm{7.0}      & \thi{75.1}\spm{5.1} & \thi{52.3}\spm{2.8}  \\
PPGNs      & \f{77.2}\spm{4.7} & \s{90.6}\spm{8.7}& \s{66.2}\spm{6.6}  & 73.0\spm{15.8}      & 50.5\spm{3.6}  \\
Natural GN & 71.7\spm{1.0} & 89.4\spm{1.6}        & \f{66.8}\spm{1.7}  & 73.5\spm{2.0}       & 51.3\spm{1.5}  \\
MPSN       & \thi{76.7}\spm{4.6} & \thi{89.8}\spm{5.5}  & 61.8\spm{9.1}  & \s{75.6}\spm{3.2}   & \s{52.4}\spm{2.9}  \\
\textbf{HiGCN}      
           &\s{77.0}\spm{4.2} & \f{91.3}\spm{6.4} & \s{66.2}\spm{6.9} & \f{76.2}\spm{5.1} & \f{52.7}\spm{3.5}  \\ \bottomrule
\end{tabular}}
\caption{Graph classification results. The best results are in bold, while the second-best ones are underlined.}
\label{tab:TUD}
\end{table}

\subsection{Simplicial Data Imputation}
In the previous two experiments, we focused on pairwise graphs with clique complex lifting. Now, we extend our investigation to impute missing signals in coauthorship complexes, a typical SC, wherein a paper with $p+1$ authors is represented by a $p$-simplex, and the $p$-simplicial signal corresponds to the number of collaborative publications among authors in the $p$-simplex. 
We employ three coauthorship complexes, namely DBLP \cite{data:DBLP-Benson2018}, History and Geology \cite{data:MAG-His-Geo}.  
The known signals for $0$-simplex are set to range from 10\% to 70\% (in units of 20\%), and the remainders are regarded as missing signals, replaced by the median of known signals. 
We apply Kendall's Tau $\mathcal{T}$ to measure the correlation between true and predicted simplicial signal, with $\mathcal{T}$ approaching 1 indicating superior performance \cite{kendall1938new}.
The experiment is repeated for 10 different random weight initializations, and the results are compared against higher-order models (namely SNN, SGAT, and SGATEF).

Table \ref{tab:simplicial_imputation} shows that HiGCN outperforms other higher-order benchmarks.
This superiority is mainly due to the inherent flexibility of our model in capturing higher-order information, whereas the benchmarks are restricted to learning through upper or lower adjacencies.
%
%
Moreover, HiGCN achieves more performance gains when less information is available. This may be attributed to higher-order information compensating for missing signals, 
with potential overlap when there is an abundance of known information.

\begin{table}[!t]
\centering
\renewcommand\tabcolsep{0.9pt} 
\resizebox{0.48\textwidth}{!}{
\begin{tabular}{cccccc}
\toprule
SCs  & Method & 10\% & 30\%  & 50\%  & 70\%   \\\midrule
\multirow{4}{*}{\rotatebox{90}{History}}
& SNN            & 0.201\spm{0.013} & 0.354\spm{0.016} & 0.495\spm{0.002} & 0.661\spm{0.002} \\                    
& SGAT           & 0.180\spm{0.010}    & 0.330\spm{0.002} & 0.432\spm{0.016} & 0.602\spm{0.005} \\                 
& SGATEF         & 0.200\spm{0.002}    & 0.340\spm{0.017} & 0.454\spm{0.021} & 0.633\spm{0.012} \\                     
& \textbf{HiGCN} & \f{0.258}\spm{0.004} & \f{0.438}\spm{0.002} & \f{0.579}\spm{0.005} & \f{0.666}\spm{0.009} \\\midrule
\multirow{4}{*}{\rotatebox{90}{Geology}}                    
& SNN            & 0.265\spm{0.022} & 0.417\spm{0.004} & 0.594\spm{0.02}  & 0.704\spm{0.003} \\
& SGAT           & 0.223\spm{0.004}    & 0.345\spm{0.030} & 0.599\spm{0.009} & 0.631\spm{0.008} \\
& SGATEF         & 0.230\spm{0.002}    &0.369\spm{0.018} & 0.615\spm{0.031} & 0.682\spm{0.012} \\
& \textbf{HiGCN} & \f{0.463}\spm{0.012} & \f{0.565}\spm{0.007} & \f{0.644}\spm{0.014} & \f{0.708}\spm{0.002} \\\midrule
\multirow{4}{*}{\rotatebox{90}{DBLP}}                      
& SNN            & 0.222\spm{0.021} & 0.348\spm{0.008} & 0.496\spm{0.005} & 0.668\spm{0.003}\\
& SGAT           & 0.210\spm{0.015}    & 0.279\spm{0.054} & 0.487\spm{0.022} & 0.643\spm{0.017}\\
& SGATEF         & 0.223\spm{0.004}    & 0.311\spm{0.002} & 0.491\spm{0.008} & 0.678\spm{0.005}\\
& \textbf{HiGCN} & \f{0.385}\spm{0.011} & \f{0.511}\spm{0.004} & \f{0.587}\spm{0.021} & \f{0.685}\spm{0.002}\\\bottomrule
\end{tabular}}
\caption{Simplicial data imputation results: mean Kendall correlation$\pm$standard deviation. The best results are in bold.}
\label{tab:simplicial_imputation}
\end{table}

\section{Conclusion}

This paper introduces a novel higher-order representation, the flower-petals (FP) model, enabling interactions among simplices of arbitrary orders.
To increase efficiency, we simplify the interaction rules in SCs.
It is a valuable and open question whether other simplifications would be more effective for specific tasks.
FP adjacency and Laplacian matrices are further introduced based on the higher-order random walk dynamics on the FP model. 
As an application of FP Laplacians in deep learning, a higher-order graph convolutional network (HiGCN) is introduced.  
Our theoretical analysis highlights HiGCN's advanced expressiveness, supported by empirical performance gains across various tasks.
Moreover, we deploy a data-driven strategy to demonstrate the existence of higher-order interactions and quantify their strength.
This work promises to offer novel insights and serve as a potent tool in higher-order network analysis.

\section*{Acknowledgments}
The authors acknowledge the STI 2030—Major Projects (Grant No. 2022
ZD0211400), the National Natural Science Foundation of China (Grant No.T2293771), the Sichuan Science and Technology Program (Grant
No.2023NSFSC1919) and the New Cornerstone Science Foundation through the XPLORER PRIZE.

\bibliography{ref.bib}

\begin{thebibliography}{68}
\providecommand{\natexlab}[1]{#1}

\bibitem[{Atwood and Towsley(2016)}]{DCNN_atwood2016}
Atwood, J.; and Towsley, D. 2016.
\newblock Diffusion-convolutional neural networks.
\newblock \emph{Advances in neural information processing systems}, 29.

\bibitem[{Battiston et~al.(2021)Battiston, Amico, Barrat, Bianconi, Ferraz~de Arruda, Franceschiello, Iacopini, K{\'e}fi, Latora, Moreno et~al.}]{battiston2021physics}
Battiston, F.; Amico, E.; Barrat, A.; Bianconi, G.; Ferraz~de Arruda, G.; Franceschiello, B.; Iacopini, I.; K{\'e}fi, S.; Latora, V.; Moreno, Y.; et~al. 2021.
\newblock The physics of higher-order interactions in complex systems.
\newblock \emph{Nature Physics}, 17(10): 1093--1098.

\bibitem[{Battiston et~al.(2020)Battiston, Cencetti, Iacopini, Latora, Lucas, Patania, Young, and Petri}]{HigherOrderReview2020}
Battiston, F.; Cencetti, G.; Iacopini, I.; Latora, V.; Lucas, M.; Patania, A.; Young, J.-G.; and Petri, G. 2020.
\newblock Networks beyond pairwise interactions: Structure and dynamics.
\newblock \emph{Physics Reports}, 874: 1--92.

\bibitem[{Benson et~al.(2018)Benson, Abebe, Schaub, Jadbabaie, and Kleinberg}]{data:DBLP-Benson2018}
Benson, A.~R.; Abebe, R.; Schaub, M.~T.; Jadbabaie, A.; and Kleinberg, J. 2018.
\newblock Simplicial closure and higher-order link prediction.
\newblock \emph{Proceedings of the National Academy of Sciences}.

\bibitem[{Billings et~al.(2019)Billings, Hu, Lerda, Medvedev, Mottes, Onicas, Santoro, and Petri}]{Simplex2vec}
Billings, J. C.~W.; Hu, M.; Lerda, G.; Medvedev, A.~N.; Mottes, F.; Onicas, A.; Santoro, A.; and Petri, G. 2019.
\newblock Simplex2vec embeddings for community detection in simplicial complexes.
\newblock arXiv:1906.09068.

\bibitem[{Bodnar et~al.(2021)Bodnar, Frasca, Wang, Otter, Montufar, Lio, and Bronstein}]{SWL2021}
Bodnar, C.; Frasca, F.; Wang, Y.; Otter, N.; Montufar, G.~F.; Lio, P.; and Bronstein, M. 2021.
\newblock Weisfeiler and lehman go topological: Message passing simplicial networks.
\newblock In \emph{International Conference on Machine Learning (ICML)}, 1026--1037.

\bibitem[{Bomze et~al.(1999)Bomze, Budinich, Pardalos, and Pelillo}]{maxClique1999}
Bomze, I.~M.; Budinich, M.; Pardalos, P.~M.; and Pelillo, M. 1999.
\newblock The maximum clique problem.
\newblock In \emph{Handbook of combinatorial optimization}, 1--74. Springer.

\bibitem[{Borgwardt et~al.(2005)Borgwardt, Ong, Sch{\"o}nauer, Vishwanathan, Smola, and Kriegel}]{data:proteins2005}
Borgwardt, K.~M.; Ong, C.~S.; Sch{\"o}nauer, S.; Vishwanathan, S.; Smola, A.~J.; and Kriegel, H.-P. 2005.
\newblock Protein function prediction via graph kernels.
\newblock \emph{Bioinformatics}, 21(suppl\_1): i47--i56.

\bibitem[{Bron and Kerbosch(1973)}]{find_cliques1973}
Bron, C.; and Kerbosch, J. 1973.
\newblock Algorithm 457: Finding All Cliques of an Undirected Graph.
\newblock \emph{Commun. ACM}, 16(9): 575–577.

\bibitem[{Cai, F{\"u}rer, and Immerman(1992)}]{CFI1992}
Cai, J.; F{\"u}rer, M.; and Immerman, N. 1992.
\newblock An optimal lower bound on the number of variables for graph identification.
\newblock \emph{Combinatorica}, 12(4): 398--140.

\bibitem[{Centola(2010)}]{SocialNet2010}
Centola, D. 2010.
\newblock The Spread of Behavior in an Online Social Network Experiment.
\newblock \emph{Science}, 329(5996): 1194--1197.

\bibitem[{Chen, Gel, and Poor(2022)}]{BScNets}
Chen, Y.; Gel, Y.~R.; and Poor, H.~V. 2022.
\newblock BScNets: Block Simplicial Complex Neural Networks.
\newblock volume~36, 6333--6341.

\bibitem[{Chiba and Nishizeki(1985)}]{chiba1985arboricity}
Chiba, N.; and Nishizeki, T. 1985.
\newblock Arboricity and subgraph listing algorithms.
\newblock \emph{SIAM J. Comput.}, 14(1): 210--223.

\bibitem[{Chien et~al.(2020)Chien, Peng, Li, and Milenkovic}]{GPRGNN}
Chien, E.; Peng, J.; Li, P.; and Milenkovic, O. 2020.
\newblock Adaptive Universal Generalized PageRank Graph Neural Network.
\newblock In \emph{International Conference on Learning Representations}.

\bibitem[{de~Haan, Cohen, and Welling(2020)}]{natural_GN_2020}
de~Haan, P.; Cohen, T.~S.; and Welling, M. 2020.
\newblock Natural graph networks.
\newblock \emph{Advances in neural information processing systems}, 33: 3636--3646.

\bibitem[{Debnath et~al.(1991)Debnath, Lopez~de Compadre, Debnath, Shusterman, and Hansch}]{data:MUTAG}
Debnath, A.~K.; Lopez~de Compadre, R.~L.; Debnath, G.; Shusterman, A.~J.; and Hansch, C. 1991.
\newblock Structure-activity relationship of mutagenic aromatic and heteroaromatic nitro compounds. correlation with molecular orbital energies and hydrophobicity.
\newblock \emph{Journal of medicinal chemistry}, 34(2): 786--797.

\bibitem[{Defferrard, Bresson, and Vandergheynst(2016)}]{ChebNet}
Defferrard, M.; Bresson, X.; and Vandergheynst, P. 2016.
\newblock Convolutional neural networks on graphs with fast localized spectral filtering.
\newblock \emph{Advances in neural information processing systems}, 29: 3838--3845.

\bibitem[{Ebli, Defferrard, and Spreemann(2020)}]{SNN2020}
Ebli, S.; Defferrard, M.; and Spreemann, G. 2020.
\newblock Simplicial Neural Networks.
\newblock In \emph{NeurIPS 2020 Workshop on Topological Data Analysis and Beyond}.

\bibitem[{Fey and Lenssen(2019)}]{graphwithPyTorch2019}
Fey, M.; and Lenssen, J.~E. 2019.
\newblock Fast graph representation learning with PyTorch Geometric.
\newblock arXiv:1903.02428.

\bibitem[{Ganmor, Segev, and Schneidman(2011)}]{BrainNet2011}
Ganmor, E.; Segev, R.; and Schneidman, E. 2011.
\newblock Sparse low-order interaction network underlies a highly correlated and learnable neural population code.
\newblock \emph{Proceedings of the National Academy of Sciences}, 108(23): 9679--9684.

\bibitem[{Gao et~al.(2022)Gao, Zhang, Lin, Zhao, Du, and Zou}]{HypergraphLearning22review}
Gao, Y.; Zhang, Z.; Lin, H.; Zhao, X.; Du, S.; and Zou, C. 2022.
\newblock Hypergraph Learning: Methods and Practices.
\newblock \emph{IEEE Transactions on Pattern Analysis and Machine Intelligence}, 44(5): 2548--2566.

\bibitem[{G{\"a}rtner, Flach, and Wrobel(2003)}]{RWK2003}
G{\"a}rtner, T.; Flach, P.; and Wrobel, S. 2003.
\newblock On graph kernels: Hardness results and efficient alternatives.
\newblock In \emph{Learning Theory and Kernel Machines}, 129--143. Springer.

\bibitem[{Gasteiger, Bojchevski, and G{\"u}nnemann(2019)}]{APPNP}
Gasteiger, J.; Bojchevski, A.; and G{\"u}nnemann, S. 2019.
\newblock Predict then Propagate: Graph Neural Networks meet Personalized PageRank.
\newblock In \emph{International Conference on Learning Representations}.

\bibitem[{Grilli et~al.(2017)Grilli, Barab{\'a}s, Michalska-Smith, and Allesina}]{Ecology2017}
Grilli, J.; Barab{\'a}s, G.; Michalska-Smith, M.~J.; and Allesina, S. 2017.
\newblock Higher-order interactions stabilize dynamics in competitive network models.
\newblock \emph{Nature}, 548(7666): 210--213.

\bibitem[{Hacker(2020)}]{k-simplex2vec}
Hacker, C. 2020.
\newblock k-simplex2vec: a simplicial extension of node2vec.
\newblock arXiv:2010.05636.

\bibitem[{Hajij et~al.(2022)Hajij, Zamzmi, Papamarkou, Maroulas, and Cai}]{GMPS22}
Hajij, M.; Zamzmi, G.; Papamarkou, T.; Maroulas, V.; and Cai, X. 2022.
\newblock Simplicial complex representation learning.
\newblock In \emph{Machine Learning on Graphs (MLoG) Workshop at 15th ACM International WSDM Conference}.

\bibitem[{Hatcher(2002)}]{Hodge_Hatcher}
Hatcher, A. 2002.
\newblock \emph{Algebraic topology}.
\newblock Cambridge University Press.

\bibitem[{He et~al.(2021)He, Wei, Xu et~al.}]{BernNet}
He, M.; Wei, Z.; Xu, H.; et~al. 2021.
\newblock Bernnet: Learning arbitrary graph spectral filters via bernstein approximation.
\newblock \emph{Advances in Neural Information Processing Systems}, 34: 14239--14251.

\bibitem[{Hu et~al.(2020)Hu, Fey, Zitnik, Dong, Ren, Liu, Catasta, and Leskovec}]{data:ogbn_arxiv}
Hu, W.; Fey, M.; Zitnik, M.; Dong, Y.; Ren, H.; Liu, B.; Catasta, M.; and Leskovec, J. 2020.
\newblock Open graph benchmark: Datasets for machine learning on graphs.
\newblock \emph{Advances in neural information processing systems}, 33: 22118--22133.

\bibitem[{Kendall(1938)}]{kendall1938new}
Kendall, M.~G. 1938.
\newblock A new measure of rank correlation.
\newblock \emph{Biometrika}, 30(1/2): 81--93.

\bibitem[{Kipf and Welling(2017)}]{GCN}
Kipf, T.~N.; and Welling, M. 2017.
\newblock Semi-Supervised Classification with Graph Convolutional Networks.
\newblock In \emph{ICLR}.

\bibitem[{Lee, Ji, and Tay(2022)}]{SGAT}
Lee, S.~H.; Ji, F.; and Tay, W.~P. 2022.
\newblock SGAT: Simplicial Graph Attention Network.
\newblock In \emph{Proceedings of the Thirty-First International Joint Conference on Artificial Intelligence (IJCAI-22)}, 3192--3200.

\bibitem[{Li et~al.(2022)Li, Zhu, Cheng, Shan, Luo, Li, and Qian}]{GloGNN}
Li, X.; Zhu, R.; Cheng, Y.; Shan, C.; Luo, S.; Li, D.; and Qian, W. 2022.
\newblock Finding global homophily in graph neural networks when meeting heterophily.
\newblock In \emph{International Conference on Machine Learning}, 13242--13256. PMLR.

\bibitem[{Lim and Benson(2021)}]{data:genius}
Lim, D.; and Benson, A.~R. 2021.
\newblock Expertise and dynamics within crowdsourced musical knowledge curation: A case study of the genius platform.
\newblock In \emph{Proceedings of the International AAAI Conference on Web and Social Media}, volume~15, 373--384.

\bibitem[{Luan et~al.(2022)Luan, Hua, Lu, Zhu, Zhao, Zhang, Chang, and Precup}]{ACM-GCN}
Luan, S.; Hua, C.; Lu, Q.; Zhu, J.; Zhao, M.; Zhang, S.; Chang, X.-W.; and Precup, D. 2022.
\newblock Revisiting heterophily for graph neural networks.
\newblock \emph{Advances in neural information processing systems}, 35: 1362--1375.

\bibitem[{Maron et~al.(2019)Maron, Ben-Hamu, Serviansky, and Lipman}]{PPGNs_maron2019}
Maron, H.; Ben-Hamu, H.; Serviansky, H.; and Lipman, Y. 2019.
\newblock Provably powerful graph networks.
\newblock \emph{Advances in neural information processing systems}, 32.

\bibitem[{Maron et~al.(2018)Maron, Ben-Hamu, Shamir, and Lipman}]{IGN_2018}
Maron, H.; Ben-Hamu, H.; Shamir, N.; and Lipman, Y. 2018.
\newblock Invariant and Equivariant Graph Networks.
\newblock In \emph{International Conference on Learning Representations}.

\bibitem[{Milgram(1967)}]{SmallWorld1967}
Milgram, S. 1967.
\newblock The small world problem.
\newblock \emph{Psychology today}, 2(1): 60--67.

\bibitem[{Morris et~al.(2019)Morris, Ritzert, Fey, Hamilton, Lenssen, Rattan, and Grohe}]{kWL2019}
Morris, C.; Ritzert, M.; Fey, M.; Hamilton, W.~L.; Lenssen, J.~E.; Rattan, G.; and Grohe, M. 2019.
\newblock Weisfeiler and Leman Go Neural: Higher-Order Graph Neural Networks.
\newblock \emph{Proceedings of the AAAI Conference on Artificial Intelligence}, 33(01): 4602--4609.

\bibitem[{Neumann et~al.(2016)Neumann, Garnett, Bauckhage, and Kersting}]{PK_2016neumann}
Neumann, M.; Garnett, R.; Bauckhage, C.; and Kersting, K. 2016.
\newblock Propagation kernels: efficient graph kernels from propagated information.
\newblock \emph{Machine Learning}, 102: 209--245.

\bibitem[{Orsini et~al.(2015)Orsini, Dankulov, Colomer-de Sim{\'o}n, Jamakovic, Mahadevan, Vahdat, Bassler, Toroczkai, Bogun{\'a}, Caldarelli et~al.}]{nullmodel}
Orsini, C.; Dankulov, M.~M.; Colomer-de Sim{\'o}n, P.; Jamakovic, A.; Mahadevan, P.; Vahdat, A.; Bassler, K.~E.; Toroczkai, Z.; Bogun{\'a}, M.; Caldarelli, G.; et~al. 2015.
\newblock Quantifying randomness in real networks.
\newblock \emph{Nature communications}, 6(1): 8627.

\bibitem[{Pei et~al.(2020)Pei, Wei, Chang, Lei, and Yang}]{Pei2020Geom-GCN}
Pei, H.; Wei, B.; Chang, K. C.-C.; Lei, Y.; and Yang, B. 2020.
\newblock Geom-GCN: Geometric Graph Convolutional Networks.
\newblock In \emph{International Conference on Learning Representations}.

\bibitem[{Roddenberry, Glaze, and Segarra(2021)}]{SCoNe21Roddenberry}
Roddenberry, T.~M.; Glaze, N.; and Segarra, S. 2021.
\newblock Principled Simplicial Neural Networks for Trajectory Prediction.
\newblock In \emph{Proceedings of the 38th International Conference on Machine Learning}, volume 139, 9020--9029. PMLR.

\bibitem[{Rozemberczki, Allen, and Sarkar(2021)}]{Rozemberczki2021}
Rozemberczki, B.; Allen, C.; and Sarkar, R. 2021.
\newblock Multi-scale attributed node embedding.
\newblock \emph{Journal of Complex Networks}, 9(2): 1--22.

\bibitem[{Schaub et~al.(2020)Schaub, Benson, Horn, Lippner, and Jadbabaie}]{hodge_schaub2020random}
Schaub, M.~T.; Benson, A.~R.; Horn, P.; Lippner, G.; and Jadbabaie, A. 2020.
\newblock Random walks on simplicial complexes and the normalized Hodge 1-Laplacian.
\newblock \emph{SIAM Review}, 62(2): 353--391.

\bibitem[{Shchur et~al.(2018)Shchur, Mumme, Bojchevski, and G{\"{u}}nnemann}]{Shchur2018}
Shchur, O.; Mumme, M.; Bojchevski, A.; and G{\"{u}}nnemann, S. 2018.
\newblock Pitfalls of Graph Neural Network Evaluation.
\newblock \emph{CoRR}, abs/1811.05868.

\bibitem[{Shervashidze et~al.(2011)Shervashidze, Schweitzer, Van~Leeuwen, Mehlhorn, and Borgwardt}]{WL_kernel_2011shervashidze}
Shervashidze, N.; Schweitzer, P.; Van~Leeuwen, E.~J.; Mehlhorn, K.; and Borgwardt, K.~M. 2011.
\newblock Weisfeiler-lehman graph kernels.
\newblock \emph{Journal of Machine Learning Research}, 12(9).

\bibitem[{Shervashidze et~al.(2009)Shervashidze, Vishwanathan, Petri, Mehlhorn, and Borgwardt}]{GK_2009efficient}
Shervashidze, N.; Vishwanathan, S.; Petri, T.; Mehlhorn, K.; and Borgwardt, K. 2009.
\newblock Efficient graphlet kernels for large graph comparison.
\newblock In \emph{Artificial intelligence and statistics}, 488--495. PMLR.

\bibitem[{Shuman et~al.(2013)Shuman, Narang, Frossard, Ortega, and Vandergheynst}]{graphFourier2013}
Shuman, D.~I.; Narang, S.~K.; Frossard, P.; Ortega, A.; and Vandergheynst, P. 2013.
\newblock The emerging field of signal processing on graphs: Extending high-dimensional data analysis to networks and other irregular domains.
\newblock \emph{IEEE Signal Processing Magazine}, 30(3): 83--98.

\bibitem[{Sinha et~al.(2015)Sinha, Shen, Song, Ma, Eide, Hsu, and Wang}]{data:MAG-His-Geo}
Sinha, A.; Shen, Z.; Song, Y.; Ma, H.; Eide, D.; Hsu, B.-J.~P.; and Wang, K. 2015.
\newblock An Overview of Microsoft Academic Service ({MAS}) and Applications.
\newblock In \emph{Proceedings of the 24th International Conference on World Wide Web}. {ACM} Press.

\bibitem[{Toivonen et~al.(2003)Toivonen, Srinivasan, King, Kramer, and Helma}]{data:PTC}
Toivonen, H.; Srinivasan, A.; King, R.~D.; Kramer, S.; and Helma, C. 2003.
\newblock Statistical evaluation of the predictive toxicology challenge 2000--2001.
\newblock \emph{Bioinformatics}, 19(10): 1183--1193.

\bibitem[{Traud, Mucha, and Porter(2012)}]{data:penn94}
Traud, A.~L.; Mucha, P.~J.; and Porter, M.~A. 2012.
\newblock Social structure of facebook networks.
\newblock \emph{Physica A: Statistical Mechanics and its Applications}, 391(16): 4165--4180.

\bibitem[{Veli{\v{c}}kovi{\'c} et~al.(2018)Veli{\v{c}}kovi{\'c}, Cucurull, Casanova, Romero, Li{\`o}, and Bengio}]{GAT2018}
Veli{\v{c}}kovi{\'c}, P.; Cucurull, G.; Casanova, A.; Romero, A.; Li{\`o}, P.; and Bengio, Y. 2018.
\newblock Graph Attention Networks.
\newblock In \emph{International Conference on Learning Representations (ICLR)}.

\bibitem[{Wang and Zhang(2022)}]{LinearGNN}
Wang, X.; and Zhang, M. 2022.
\newblock How Powerful are Spectral Graph Neural Networks.
\newblock In \emph{International Conference on Machine Learning (ICML)}.

\bibitem[{Weisfeiler and Leman(1968)}]{WLtest}
Weisfeiler, B.; and Leman, A. 1968.
\newblock The reduction of a graph to canonical form and the algebra which appears therein.
\newblock \emph{NTI, Series}, 2(9): 12--16.

\bibitem[{Wu et~al.(2019)Wu, Souza, Zhang, Fifty, Yu, and Weinberger}]{SGC}
Wu, F.; Souza, A.; Zhang, T.; Fifty, C.; Yu, T.; and Weinberger, K. 2019.
\newblock Simplifying Graph Convolutional Networks.
\newblock In Chaudhuri, K.; and Salakhutdinov, R., eds., \emph{Proceedings of the 36th International Conference on Machine Learning}, volume~97 of \emph{Proceedings of Machine Learning Research}, 6861--6871. PMLR.

\bibitem[{Xu et~al.(2019)Xu, Hu, Leskovec, and Jegelka}]{GIN2019}
Xu, K.; Hu, W.; Leskovec, J.; and Jegelka, S. 2019.
\newblock How powerful are graph neural networks?
\newblock In \emph{International Conference on Learning Representations (ICLR)}.

\bibitem[{Yan et~al.(2021)Yan, Hashemi, Swersky, Yang, and Koutra}]{GGCN}
Yan, Y.; Hashemi, M.; Swersky, K.; Yang, Y.; and Koutra, D. 2021.
\newblock Two sides of the same coin: Heterophily and oversmoothing in graph convolutional neural networks.
\newblock arXiv:2102.06462.

\bibitem[{Yanardag and Vishwanathan(2015)}]{data:IMDB-REDDIT}
Yanardag, P.; and Vishwanathan, S. 2015.
\newblock Deep Graph Kernels.
\newblock In \emph{Proceedings of the 21th ACM SIGKDD International Conference on Knowledge Discovery and Data Mining}, KDD '15, 1365–1374. New York, NY, USA: Association for Computing Machinery.
\newblock ISBN 9781450336642.

\bibitem[{Yang, Isufi, and Leus(2022)}]{SCNN2022}
Yang, M.; Isufi, E.; and Leus, G. 2022.
\newblock Simplicial convolutional neural networks.
\newblock In \emph{ICASSP}, 8847--8851.

\bibitem[{Yang et~al.(2023)Yang, Zhou, Liu, and L{\"u}}]{YRM2023}
Yang, R.; Zhou, F.; Liu, B.; and L{\"u}, L. 2023.
\newblock A generalized simplicial model and its application.
\newblock arXiv:2309.02851.

\bibitem[{Yang, Cohen, and Salakhudinov(2016)}]{Yang2016CiteGraph}
Yang, Z.; Cohen, W.; and Salakhudinov, R. 2016.
\newblock Revisiting Semi-Supervised Learning with Graph Embeddings.
\newblock In \emph{Proceedings of The 33rd International Conference on Machine Learning}, volume~48 of \emph{Proceedings of Machine Learning Research}, 40--48. New York, New York, USA: PMLR.

\bibitem[{Zeng et~al.(2023{\natexlab{a}})Zeng, Huang, Ren, and L{\"u}}]{HoRW}
Zeng, Y.; Huang, Y.; Ren, X.-L.; and L{\"u}, L. 2023{\natexlab{a}}.
\newblock Identifying vital nodes through augmented random walks on higher-order networks.
\newblock arXiv:2305.06898.

\bibitem[{Zeng et~al.(2023{\natexlab{b}})Zeng, Huang, Wu, and L{\"u}}]{ISMnet}
Zeng, Y.; Huang, Y.; Wu, Q.; and L{\"u}, L. 2023{\natexlab{b}}.
\newblock Influential Simplices Mining via Simplicial Convolutional Network.
\newblock arXiv:2307.05841.

\bibitem[{Zeng et~al.(2023{\natexlab{c}})Zeng, Liu, Zhou, and L{\"u}}]{zeng2023hyper}
Zeng, Y.; Liu, B.; Zhou, F.; and L{\"u}, L. 2023{\natexlab{c}}.
\newblock Hyper-Null Models and Their Applications.
\newblock \emph{Entropy}, 25(10): 1390.

\bibitem[{Zhang et~al.(2023)Zhang, Yan, He, Li, and Chu}]{DRGCN}
Zhang, L.; Yan, X.; He, J.; Li, R.; and Chu, W. 2023.
\newblock DRGCN: Dynamic Evolving Initial Residual for Deep Graph Convolutional Networks.
\newblock arXiv:2302.05083.

\bibitem[{Zhang et~al.(2018)Zhang, Cui, Neumann, and Chen}]{DGCNN_zhang2018}
Zhang, M.; Cui, Z.; Neumann, M.; and Chen, Y. 2018.
\newblock An end-to-end deep learning architecture for graph classification.
\newblock In \emph{Proceedings of the AAAI conference on artificial intelligence}, volume~32.

\bibitem[{Zhu et~al.(2020)Zhu, Yan, Zhao, Heimann, Akoglu, and Koutra}]{homo2020}
Zhu, J.; Yan, Y.; Zhao, L.; Heimann, M.; Akoglu, L.; and Koutra, D. 2020.
\newblock Beyond homophily in graph neural networks: Current limitations and effective designs.
\newblock \emph{Advances in Neural Information Processing Systems}, 33: 7793--7804.

\end{thebibliography}
\appendix

\onecolumn
\begin{center}
{\Large \bf Appendix} 
\end{center}

\section{Spectral analysis for flower-petals algebraic descriptions}
\label{appendix: spectral}
In this section, we provide a theoretical analysis of the spectral properties of the flower-petals (FP) adjacency and Laplacian matrices.

We divide Theorem \ref{theorem: semi-positive} into two separate lemmas and prove them individually.

\begin{lemma}[Non-negativity of $\tilde{\mathcal{A}}_p$]
\label{lemma: semi-positive_A}
The flower-petals adjacency matrices $\tilde{\mathcal{A}}_p$ are symmetric positive semidefinite.
\end{lemma}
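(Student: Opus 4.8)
The plan is to recognize $\tilde{\mathcal{A}}_p$ as a Gram matrix and then invoke the elementary fact that any product $BB^\top$ is symmetric positive semidefinite. Starting from the simplified form already derived in the paper,
\[
\tilde{\mathcal{A}}_p = \frac{1}{p+1} D_{p,v}^{-1/2} \mathcal{H}_p \mathcal{H}_p^\top D_{p,v}^{-1/2},
\]
I would set $B := \frac{1}{\sqrt{p+1}} D_{p,v}^{-1/2} \mathcal{H}_p \in \mathbb{R}^{n \times n_p}$, so that $\tilde{\mathcal{A}}_p = B B^\top$ by construction. This factorization requires $D_{p,v}^{-1/2}$ to be well defined, i.e. every node degree $d_p(v)$ to be positive; I would dispatch this point by restricting attention to nodes contained in at least one $p$-simplex (equivalently adopting the convention $0^{-1/2} := 0$), so that isolated vertices merely contribute a zero row to $B$ and do not affect the argument.

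First I would verify symmetry. Since $D_{p,v}$ is diagonal with real entries, $D_{p,v}^{-1/2}$ is itself symmetric, and hence $(\tilde{\mathcal{A}}_p)^\top = (B B^\top)^\top = B B^\top = \tilde{\mathcal{A}}_p$. Next, for positive semidefiniteness, I would take an arbitrary $x \in \mathbb{R}^n$ and compute $x^\top \tilde{\mathcal{A}}_p x = x^\top B B^\top x = \| B^\top x \|_2^2 \ge 0$, which establishes the claim directly.

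I do not expect a genuine obstacle here: once the simplified expression for $\tilde{\mathcal{A}}_p$ (which the paper obtains from $D_{p,h} = (p+1)I$) is in hand, the Gram-matrix factorization trivializes both properties. The only point demanding care is the invertibility of $D_{p,v}$ discussed above. Note that the eigenvalue bound $\lambda(\tilde{\mathcal{A}}_p) \le 1$ quoted immediately after Theorem~\ref{theorem: semi-positive} is a separate (stronger) spectral statement and is \emph{not} needed for this lemma, which asserts only nonnegativity of the spectrum; I would therefore leave that bound to the companion analysis and keep the present proof to the two short steps above.
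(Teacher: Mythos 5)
Your proof is correct and follows essentially the same route as the paper's: the paper likewise evaluates the bilinear form $x^\top \tilde{\mathcal{A}}_p x = \frac{1}{p+1}\left(x^\top D_{p,v}^{-1/2}\mathcal{H}_p\right)\left(\mathcal{H}_p^\top D_{p,v}^{-1/2}x\right) \geq 0$, which is exactly your Gram-matrix factorization written out. Your added care about the invertibility of $D_{p,v}$ for isolated vertices is a small point the paper glosses over, but it does not change the argument.
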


\begin{proof}
By considering the bilinear form $x^\top \tilde{\mathcal{A}}_p x$, it can be easily drawn that $\tilde{\mathcal{A}}_p$ are positive semidefinite:
\begin{equation}
    \begin{split}
        x^\top \tilde{\mathcal{A}}_p x   
        & = \frac{1}{p+1} x^\top D_{p,v}^{-1/2} \mathcal{H}_p \mathcal{H}_p^\top D_{p,v}^{-1/2} x\\
        & = \frac{1}{p+1} \left(x^\top D_{p,v}^{-1/2} \mathcal{H}_p\right) \left(\mathcal{H}_p^\top D_{p,v}^{-1/2} x\right) \\
        & \geq 0.
    \end{split}
\end{equation}
\end{proof}

\begin{lemma}[Non-negativity of $\mathcal{L}_p $]
\label{lemma: semi-positive_L}
The flower-petals Laplacian matrices $\mathcal{L}_p $ are symmetric positive semidefinite.
\end{lemma}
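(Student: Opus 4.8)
The plan is to leverage the identity $\mathcal{L}_p = I - \tilde{\mathcal{A}}_p$ together with Lemma \ref{lemma: semi-positive_A}. Symmetry is immediate: $\tilde{\mathcal{A}}_p$ is symmetric and $I$ is symmetric, so $\mathcal{L}_p$ is symmetric. For positive semidefiniteness, since $\mathcal{L}_p$ is symmetric it suffices to control the spectrum of $\tilde{\mathcal{A}}_p$: every eigenvalue $\mu$ of $\tilde{\mathcal{A}}_p$ yields an eigenvalue $1-\mu$ of $\mathcal{L}_p$, so I only need to establish $\lambda_{\max}(\tilde{\mathcal{A}}_p) \le 1$. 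Equivalently, in Rayleigh-quotient form the goal reduces to proving $x^\top \tilde{\mathcal{A}}_p x \le x^\top x$ for all $x$, whence $x^\top \mathcal{L}_p x = x^\top x - x^\top \tilde{\mathcal{A}}_p x \ge 0$.

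The heart of the argument is the bound $\lambda_{\max}(\tilde{\mathcal{A}}_p)\le 1$. First I would recall from the construction that $\tilde{\mathcal{A}}_p = \tfrac{1}{p+1} D_{p,v}^{-1/2}\mathcal{H}_p\mathcal{H}_p^\top D_{p,v}^{-1/2}$, and observe that this is a symmetrized transition operator: conjugating by $D_{p,v}^{1/2}$ produces the matrix $W = D_{p,v}^{-1}\mathcal{H}_p D_{p,h}^{-1}\mathcal{H}_p^\top$ (using $D_{p,h}=(p+1)I$), which is similar to $\tilde{\mathcal{A}}_p$ and therefore shares its eigenvalues. The matrix $W$ is exactly the two-step random-walk transition matrix of Eq.~\eqref{equ:wander}, and it is nonnegative. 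The key observation is that $W$ is row-stochastic: from $\mathcal{H}_p^\top \mathbf{1} = (p+1)\mathbf{1}$ (each $p$-simplex contains $p+1$ nodes) and $\mathcal{H}_p\mathbf{1} = D_{p,v}\mathbf{1}$ (the row sums of $\mathcal{H}_p$ are the degrees $d_p(v)$), a short computation gives $W\mathbf{1} = \mathbf{1}$.

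Once row-stochasticity is in hand, the conclusion follows from Perron--Frobenius (or, equivalently, a Gershgorin-disc estimate on $W$): a nonnegative matrix with all row sums equal to $1$ has spectral radius exactly $1$, so every eigenvalue of $W$, and hence of $\tilde{\mathcal{A}}_p$, has modulus at most $1$. Combining this with Lemma \ref{lemma: semi-positive_A}, which guarantees the eigenvalues are real and nonnegative, confines the spectrum of $\tilde{\mathcal{A}}_p$ to $[0,1]$, and therefore that of $\mathcal{L}_p = I - \tilde{\mathcal{A}}_p$ to $[0,1]$ as well, establishing positive semidefiniteness.

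I anticipate the main obstacle to be the spectral-radius step rather than the bookkeeping. A purely variational route---bounding $x^\top \tilde{\mathcal{A}}_p x$ directly---is tempting but awkward, because the normalization $D_{p,v}^{-1/2}$ mixes the entries; the cleaner path is the similarity-to-stochastic-matrix argument, which isolates the single fact needing care, namely verifying $W\mathbf{1} = \mathbf{1}$. An equivalent and perhaps even more self-contained alternative is to note that $B^\top B$, with $B = D_{p,v}^{-1/2}\mathcal{H}_p$, has constant row sums $p+1$ and shares its nonzero eigenvalues with $BB^\top = (p+1)\tilde{\mathcal{A}}_p$; either way the crux is identifying the relevant stochastic structure and invoking Perron--Frobenius.
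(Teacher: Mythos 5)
Your proposal is correct, but it takes a genuinely different route from the paper. The paper argues directly on the quadratic form: it decomposes $(p+1)\mathcal{L}_p$ (after conjugation by $D_{p,v}^{-1/2}$) into a sum over $p$-simplices of elementary matrices $G_{\sigma_p}$, each of which is shown to be positive semidefinite via the Cauchy--Schwarz inequality $(p+1)\sum_t x_{i_t}^2 \geq \left(\sum_t x_{i_t}\right)^2$; nonnegativity of $y^\top \mathcal{L}_p y$ then follows term by term. You instead prove the spectral bound $\lambda_{\max}(\tilde{\mathcal{A}}_p)\leq 1$ by observing that $\tilde{\mathcal{A}}_p$ is similar to the nonnegative row-stochastic two-step transition matrix $W = D_{p,v}^{-1}\mathcal{H}_p D_{p,h}^{-1}\mathcal{H}_p^\top$ (the verification $W\mathbf{1}=\mathbf{1}$ via $\mathcal{H}_p^\top\mathbf{1}=(p+1)\mathbf{1}$ and $\mathcal{H}_p\mathbf{1}=D_{p,v}\mathbf{1}$ is exactly right), and then invoke Gershgorin or Perron--Frobenius together with Lemma \ref{lemma: semi-positive_A}. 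Note that your logical direction is the reverse of the paper's: the paper deduces $\lambda(\tilde{\mathcal{A}}_p)\leq 1$ as a corollary of the positive semidefiniteness of $\mathcal{L}_p$, whereas you establish the eigenvalue bound first and obtain positive semidefiniteness of $\mathcal{L}_p$ as a consequence; there is no circularity since your bound rests only on stochasticity. Your argument is shorter and delivers the full spectral confinement $\lambda(\tilde{\mathcal{A}}_p),\lambda(\mathcal{L}_p)\in[0,1]$ in one stroke, while the paper's sum-of-PSD-terms decomposition is more elementary (no appeal to Perron--Frobenius) and exhibits $\mathcal{L}_p$ as a weighted sum of per-simplex Laplacian-like blocks, a structural fact that is reusable in its own right. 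The only point deserving a remark in either approach is the implicit assumption that $D_{p,v}$ is invertible, i.e., every node belongs to at least one $p$-simplex; this is a convention of the paper's setup rather than a gap in your argument.
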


\begin{proof}
We first introduce a matrix $G_{\sigma_p}$ to support the proof,
\begin{equation}
    G_{\sigma_p}(i,j) = \left\{
        \begin{array}{cl}
        p, & v_i\in\sigma_p, v_j \in \sigma_p, i = j \\ 
        -1,& v_i\in\sigma_p, v_j \in \sigma_p, i \neq j \\
        0, & \text { otherwise } 
        \end{array}\right. .
\end{equation}

Let $\sigma_p = \left[v_{i_1},v_{i_2},\cdots,v_{i_{(p+1)}}\right]$ be an arbitrary $p$-simplex, and the notation $x=\left(x_1,x_2,\cdots,x_n\right)^\top$ denotes an arbitrary vector in $\mathbb{R}^n$. By considering the bilinear form $x^\top G_{\sigma_p}x$, we see that $G_{\sigma_p}$ is positive semidefinite:
\begin{equation}
    \begin{split}
        x^\top G_{\sigma_p}x  
        =& x_{i_1}\left(px_{i_1}-x_{i_2}-\cdots-x_{i_{p+1}}\right) + \\
        &   x_{i_2}\left(px_{i_2}-x_{i_1}-\cdots-x_{i_{p+1}}\right) + \cdots\\
         =& (p+1)\sum_{t=1}^{p+1}x_{i_t}^2 - \left(\sum_{t=1}^{p+1}x_{i_t}\right)^2\\
         \geq& 0.
    \end{split}
\end{equation}
The last inequality holds exploiting the Cauchy-Schwarz inequality.

We write the higher-order FP Laplacian as a sum over the $p$-simplices that
\begin{equation}
    \begin{split}
       \mathcal{L}_p  
        &= I  - \tilde{\mathcal{A}}_p \\
        & = \frac{1}{p+1}D_{p,v}^{-1/2}\left[(p+1)D_{p,v}-\mathcal{H}_p \mathcal{H}_p^\top \right]D_{p,v}^{-1/2}\\
        & =\frac{1}{p+1}D_{p,v}^{-1/2} \left(\sum_{\sigma_p \in \mathcal{K}_p} G_{\sigma_p}\right) D_{p,v}^{-1/2}.
    \end{split}
\end{equation}

We now consider the bilinear form of $\mathcal{L}_p$,
\begin{equation}
    \begin{split}
        y^\top\mathcal{L}_p y  
        &= \frac{1}{p+1} \left(y^\top D_{p,v}^{-1/2}\right) \sum_{\sigma_p \in \mathcal{K}_p} G_{\sigma_p} \left(D_{p,v}^{-1/2}y\right)  \\
        & =\frac{1}{p+1}D_{p,v}^{-1/2} \sum_{\sigma_p \in \mathcal{K}_p} \left(x^\top G_{\sigma_p} x\right) D_{p,v}^{-1/2}  \\
        & \geq 0. 
    \end{split}
\end{equation}
Here, $x=D_{p,v}^{-1/2}y$. It follows that $\mathcal{L}_p$ are positive semidefinite for arbitrary $p$.
\end{proof}

According to Lemma \ref{lemma: semi-positive_A} and Lemma \ref{lemma: semi-positive_L}, it can be directedly drawn that $0 \leq \lambda(\tilde{\mathcal{A}}_p) = \lambda\left(I-\mathcal{L}_p\right)\leq 1$.
In a similar way, one can conclude that $0 \leq  \lambda\left(\mathcal{L}_p\right)\leq 1$.

\begin{lemma}
0 is an eigenvalue of the higher-order flower-petals Laplacian matrices $\mathcal{L}_p$.
\end{lemma}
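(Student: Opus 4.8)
The plan is to exhibit an explicit nonzero vector in the kernel of $\mathcal{L}_p$. Since $\mathcal{L}_p = I - \tilde{\mathcal{A}}_p$, producing a $y \neq 0$ with $\mathcal{L}_p y = 0$ is equivalent to showing that $1$ is an eigenvalue of the FP adjacency matrix $\tilde{\mathcal{A}}_p$. By analogy with the normalized graph Laplacian, the natural candidate is $y = D_{p,v}^{1/2}\mathbf{1}$, i.e., the vector whose $i$-th entry is $\sqrt{d_p(v_i)}$.

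First I would record two elementary incidence identities: $\mathcal{H}_p^\top \mathbf{1} = (p+1)\mathbf{1}$ and $\mathcal{H}_p \mathbf{1} = D_{p,v}\mathbf{1}$. The former holds because each column of $\mathcal{H}_p$ corresponds to a $p$-simplex, which is incident to exactly $\delta_p(\sigma) = p+1$ vertices; the latter because summing a row of $\mathcal{H}_p$ over all simplices recovers the vertex degree $d_p(v)$. Substituting $y = D_{p,v}^{1/2}\mathbf{1}$ into $\tilde{\mathcal{A}}_p = \frac{1}{p+1}D_{p,v}^{-1/2}\mathcal{H}_p\mathcal{H}_p^\top D_{p,v}^{-1/2}$ and chaining these identities yields $\tilde{\mathcal{A}}_p y = \frac{1}{p+1}D_{p,v}^{-1/2}\mathcal{H}_p\mathcal{H}_p^\top\mathbf{1} = D_{p,v}^{1/2}\mathbf{1} = y$, so $\mathcal{L}_p y = 0$ and $0$ is an eigenvalue.

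Alternatively, and perhaps more cleanly, I would reuse the decomposition established in the proof of Lemma \ref{lemma: semi-positive_L}, namely $\mathcal{L}_p = \frac{1}{p+1}D_{p,v}^{-1/2}\left(\sum_{\sigma_p\in\mathcal{K}_p} G_{\sigma_p}\right)D_{p,v}^{-1/2}$, together with the quadratic identity $x^\top G_{\sigma_p} x = (p+1)\sum_t x_{i_t}^2 - \left(\sum_t x_{i_t}\right)^2$. Taking $x = \mathbf{1}$ makes each summand vanish, so $y = D_{p,v}^{1/2}\mathbf{1}$ satisfies $y^\top \mathcal{L}_p y = 0$; since $\mathcal{L}_p$ is symmetric positive semidefinite by Lemma \ref{lemma: semi-positive_L}, a zero of the associated quadratic form must be a kernel vector, giving $\mathcal{L}_p y = 0$ directly. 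This variant has the advantage of reusing machinery already in place.

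The only point requiring care is that $y$ be genuinely nonzero and that $D_{p,v}^{-1/2}$ be meaningful. Both hold provided every vertex lies in at least one $p$-simplex, so that $d_p(v) > 0$ and $y$ has strictly positive entries; this is already implicit in the definition of the FP matrices through $D_{p,v}^{-1/2}$. If isolated vertices are admitted, one would restrict to the support of $D_{p,v}$ or adopt a pseudoinverse convention, and the argument carries over verbatim on that block. I expect this well-definedness bookkeeping, rather than the algebra, to be the only (and minor) obstacle.
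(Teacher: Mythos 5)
Your proof is correct and follows essentially the same route as the paper: both exhibit an explicit kernel vector via the incidence identities $\mathcal{H}_p^\top\mathbf{1}=(p+1)\mathbf{1}$ and $\mathcal{H}_p\mathbf{1}=D_{p,v}\mathbf{1}$. In fact your version is the more careful one --- the paper's writeup asserts that $\mathbf{1}$ itself is the null eigenvector, whereas (as you correctly identify) the kernel vector is $D_{p,v}^{1/2}\mathbf{1}$, which coincides with a multiple of $\mathbf{1}$ only when all vertex degrees $d_p(v)$ are equal; your remark about vertices with $d_p(v)=0$ is likewise a worthwhile piece of bookkeeping the paper omits.
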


\begin{proof}
Let $\boldsymbol{1}=(1,1,\cdots,1)^\top$ be an all-one vector. According to the definition of $\mathcal{H}_p$, we can obtain that $\mathcal{H}_p^\top\boldsymbol{1}=(p+1)\boldsymbol{1}$ and $\mathcal{H}_p \boldsymbol{1} = D_{p,v}$. It follows that
\begin{equation}
    \begin{split}
        \mathcal{L}_p\boldsymbol{1} 
        & = \left(I-\frac{1}{p+1}D_{p,v}^{-1/2}\mathcal{H}_p{\mathcal{H}_p}^\top D_{p,v}^{-1/2}\right)\boldsymbol{1} \\
        & = D_{p,v}^{-1/2}\left(D_{p,v}-\frac{1}{p+1}\mathcal{H}_p{\mathcal{H}_p}^\top\boldsymbol{1}\right)D_{p,v}^{-1/2}\\
        & =0.
    \end{split}
\end{equation}

Therefore, $\boldsymbol{1}$ is an eigenvector of $\mathcal{L}_p$ associated with eigenvalue 0, which is also the smallest eigenvalue from the non-negativity of $\mathcal{L}_p$.
\end{proof}

\section{Expressive power analysis details}
\label{appendix: expressive_power_analysis}


To demonstrate the superior expressiveness of the proposed model, we begin by offering foundational background information in Appendix \ref{appendix: clique_complex_lift}. 
Subsequently, we introduce the Weisfeiler-Lehman test and its higher-order versions in Appendices \ref{appendix: WL} and \ref{appendix:HWL_SHWL}, respectively. Finally, Appendix \ref{appendix: proofs_HWL_SHWL} presents the formal proof.

\subsection{Clique complex lifting.} 
\label{appendix: clique_complex_lift}
We can obtain a clique complex, a kind of SCs, by extracting all cliques from a graph and regarding them as simplices. 
This implies that an empty triangle (owning $\left[v_1,v_2\right]$, $\left[v_1,v_3\right]$, $\left[v_2,v_3\right]$ but without $\left[v_1,v_2,v_3\right]$) cannot occur in clique complexes.

The transition from a pairwise graph to a clique complex, termed the clique complex lifting transition, enables us to study pairwise graphs from simplicial perspectives. 
We defer the complexity discussion of this operation in Appendix \ref{appendix: large}.

\subsection{Introduction to the Weisfeiler-Lehman test}
\label{appendix: WL}

The Weisfeiler-Lehman (WL) graph isomorphism test \cite{WLtest} provides a well-studied framework for the unique assignment of node labels. An intrinsic theoretical connection has been uncovered between the WL test and spatial GNNs \cite{GIN2019, kWL2019}. Here, we first present a brief introduction to the WL test.

\begin{algorithm}[ht] 
\caption{Weisfeiler-Lehman Test}
\textbf{Input}: Graph $\mathcal{G}$, initial node coloring $\left\{c^{(0)}(v_1), c^{(0)}(v_2), \cdots, c^{(0)}(v_n)\right\}$ \\
\textbf{Output}:~~Final node coloring $\left\{c^{(T)}(v_1), \cdots, c^{(T)}(v_n)\right\}$.
\vspace{-1em} %
\begin{algorithmic}[1] 
\STATE $t \leftarrow 0$ 
\REPEAT
    \FOR{$v$ in $\mathcal{V}$}
    \STATE $c^{(t+1)}(v) = \operatorname{Hash}\left(\multiset{c^{(t)}(u)|u \in N(v)\cup \{v\} }\right)$ 
    \ENDFOR
    \STATE $t \leftarrow t+1$
\UNTIL{stable node coloring is reached}
\end{algorithmic}
\label{alg:WL-test}
\end{algorithm}

Two graphs $\mathcal{G}$ and $\mathcal{G}'$ are considered to be isomorphic if there exists an edge-preserving bijection $\psi: \mathcal{V} \to \mathcal{V}'$, i.e., $(u,v)\in \mathcal{E}(\mathcal{G})$ if and only if $\left(\psi(u), \psi(v)\right)\in\mathcal{E}(\mathcal{G}')$. 
Graph isomorphism determination is an NP-hard problem, and the Weisfeiler-Lehman test, or $k$-WL, is a family of heuristic algorithms for testing graph isomorphism \cite{kWL2019}. $k$-WL constructs a coloring $c$ of the tuples of $k$ nodes, that is $c: \mathcal{V}^k \to \Sigma$ with arbitrary codomain $\Sigma$.

We now describe the WL test for graph $\mathcal{G}$ with coloring $c$.  In each iteration $t$, the WL algorithm updates a new coloring according to the rule
\begin{equation}
    c^{(t+1)}(v) = \operatorname{Hash}\left( \multiset{c^{(t)}(u)| u \in N(v)\cup \{v\} } \right),
\end{equation}
where $\operatorname{Hash}(\cdot)$ bijectively maps different multi-set inputs $\multiset{\cdot}$ to a unique color in $\Sigma$, and $N(v)$ presents the set of nodes adjacent to node $v$ in $\mathcal{G}$, i.e., $N(v)=\left\{u \in \mathcal{V} | (v,u)\in \mathcal{E}\right\}$. The iteration is finished until stable node coloring is reached, i.e., $c^{(t+1)}=c^{(t)}$, and termination is guaranteed within $\max \{|\mathcal{V}|, |\mathcal{E}|\}$ iterations.

To distinguish two graphs $\mathcal{G}$ and $\mathcal{G}'$ in terms of isomorphism, we perform the WL test on both $\mathcal{G}$ and $\mathcal{G}'$ in parallel, and these two graphs are non-isomorphic if their color histograms are different in the iteration.

The $k$-WL test $(k \geq 2)$ is a generalized version of the WL test that colors the tuples $\mathcal{V}^k$ instead of $\mathcal{V}$. The algorithms with larger $k$ are more powerful in distinguishing non-isomorphic graphs. However, it is noted that although the $k$-WL test is a powerful heuristic, it is not guaranteed to be effective in all cases \cite{CFI1992}. Without causing ambiguity,  we abbreviate the $1$-WL test as the WL test to unload the notation burden.


\subsection{Higher-order version of WL test}
\label{appendix:HWL_SHWL}

The proposed higher-order Weisfeiler-Lehman test, termed HWL, extends the WL test to simplicial complexes. 
SHWL, a simplified version of HWL, further reduces the coloring rule.
We can relate the expressive power of HWL, SHWL and WL by clique complex lifting transition, and it has been found that both HWL and SHWL are more powerful than Weisfeiler-Lehman in terms of expressive power.

We outline below the steps of HWL in a given simplicial complex $\mathcal{K}$ for example.

(a) Assign the same initial color $c^{(0)}(\sigma)$ to each simplex $\sigma$ in $\mathcal{K}$.

(b) Given the color  $c^{(t)}(\sigma)$ of simplex $\sigma$ at the $t$-th iteration. We will refine its color by injectively mapping neighborhood colors in the flower-petals model according to
\begin{equation}
    c^{(t+1)}(\sigma) = \operatorname{Hash}\left(c^{(t)}(\sigma), \multiset{c^{(t)}(\tau)|\tau \in \mathcal{N}(\sigma) } \right).
\end{equation}
Here, the Hash function maps different multi-set inputs to different colors, $\mathcal{N}(\sigma)=\bigcup_p \mathcal{N}_p(\sigma)$, and $\mathcal{N}_p$ denotes the set of neighbors of $\sigma$ in the bipartite graph $\mathcal{G}_p$.

(c) The algorithm stops until a stable coloring is reached. Two simplicial complexes are considered non-isomorphic if their color histograms are different at any iteration. Otherwise, the test is inconclusive.

\begin{theorem}
\label{the:HWL}
HWL with clique complex lifting is strictly more powerful than the Weisfeiler-Lehman (WL) test.
\end{theorem}

The proof is deferred to Appendix \ref{appendix: proofs_HWL_SHWL}.

Moreover, we can simplify the coloring rule for simplices whose order is larger than zero to nonvanishing linear functions, i.e.,
\begin{equation}
    c^{(t+1)}(\sigma) = \operatorname{Linear}\left(c^{(t)}(\sigma), \multiset{c^{(t)}(\tau)|\tau \in \mathcal{N}(\sigma) } \right).
\end{equation}
Here, $dim(\sigma)>0$ and $\operatorname{Linear}(\cdot)$ is a nonvanishing linear function requiring that the coefficients of the polynomial never be zero. Other steps are the same as in HWL, and the simplified version of HWL, referred to as SHWL, is shown in Algorithm \ref{alg:SHWL}.

We find that SHWL is still more powerful than the WL test (refer to Theorem \ref{the:SHWL}).

\begin{algorithm}[htp] 
\caption{Simplified Higher-order WL Test (SHWL)}
\textbf{Input}: Simplicial complex $\mathcal{K}$, initial simplicial coloring $\left\{c^{(0)}(\sigma_1), c^{(0)}(\sigma_2), \cdots \right\}$ \\
\textbf{Output}: Final node coloring $\left\{c^{(T)}(v_1), \cdots, c^{(T)}(v_n)\right\}$.
\vspace{-1em} 
\begin{algorithmic}[1]
\STATE $t \leftarrow 0$ 
\REPEAT
    \FOR{$v$ in $\mathcal{V}$}
    \STATE $c^{(t+1)}(v) = \operatorname{Hash}\left(c^{(t)}(v), \multiset{c^{(t)}(\tau)|\tau \in \mathcal{N}(v) } \right)$ 
    \ENDFOR
    \FOR{$\sigma$ in $\mathcal{K}_1 \cup \mathcal{K}_2 \cup \cdots$}
    \STATE $c^{(t+1)}(\sigma) = \operatorname{Linear}\left(c^{(t)}(\sigma), \multiset{c^{(t)}(v)| v \in \mathcal{N}(\sigma) } \right)$ 
    \ENDFOR
    \STATE $t \leftarrow t+1$
\UNTIL{stable simplicial coloring is reached}
\end{algorithmic}
\label{alg:SHWL}
\end{algorithm}

In dense graphs, the number of total simplices is much larger than the node number. It is not necessary to explicitly figure out messages for higher-order structures when tackling the most common node-level tasks. 
Hence, we can perform the two message-passing steps simultaneously. By replacing the colors with continuous feature vectors in the SHWL test and replacing the Hash function with a neural network layer-like differentiable function with learnable parameters, we can obtain 
\begin{equation}
    X^{(k+1)} = \sigma \left(\tilde{\mathcal{A}}_p X^{(k)}\Theta_p^{(k)} \right).
\end{equation}
Here, $X^{(k)} \in \mathbb{R}^{n\times d}$ is the signal at the $k$-th layer, $X^{(0)}=X$ presents the initial node features, and $\sigma(\cdot)$ is differentiable non-linear transition function. Following \cite{SGC}, the non-linear transition functions between the layers are removed.
Besides, by linearly combining the results from different layers, we can further obtain that 
\begin{equation}
   Y_p = \sum_{k=0}^K\gamma_{p,k}\tilde{\mathcal{A}}_p^k X\Theta_p.
\end{equation}
Here, $\Theta_p = \Theta_p^{(1)}\Theta_p^{(2)}\cdots\Theta_p^{(K)}$ is reparameterized. 

We finally concatenate the results from different FP convolutions followed by a linear layer to reshape the output as follows
\begin{equation}
    Y =   \mathop{\Big|\!\Big|}\limits_{p=1}^P \left(\sum_{k=0}^K \gamma_{p,k} \tilde{\mathcal{A}_p^k} X \Theta_p \right) W .
\end{equation}

%
Therefore, the HiGCN model can  be interpreted as a neural version of the SHWL test. 
Theorem \ref{the:SHWL} suggests that the proposed HiGCN model endows with greater potential than the traditional GNNs.

\subsection{Proofs of HWL and SHWL theory}
\label{appendix: proofs_HWL_SHWL}

We begin by introducing some required notions and symbols. Note that although these results apply mainly to simplicial complexes, they also work for graphs since graphs can be viewed as special simplicial complexes. Besides, we can also obtain a unique simplex complex by taking clique complex lifting.

\begin{definition}
A simplicial coloring $c$ is a mapping that assigns a color from a specific color scheme to a simplicial complex $\mathcal{K}$ and its simplices $\sigma$. We notate this color as $c^{\mathcal{K}}(\sigma)$, or as $c(\sigma)$ when it does not cause ambiguity.
\end{definition}

\begin{definition}
Let $c$ be a simplicial coloring. Simplicial complexes $\mathcal{J}$, $\mathcal{K}$ are considered to be $c$-similar, represented by $c^{\mathcal{J}}=c^{\mathcal{K}}$, if the numbers of simplices with the same color in $\mathcal{J}$ and $\mathcal{K}$ are identical. Otherwise, we say $c^{\mathcal{J}} \neq c^{\mathcal{K}}$.
\end{definition}

\begin{definition}
A simplicial coloring $c$ refines another one $d$, noted as $c \sqsubseteq d$. Then for all simplicial complexes $\mathcal{J}$, $\mathcal{K}$ and all $\sigma \in \mathcal{J}, \tau \in \mathcal{K}$, $d^{\mathcal{J}}(\sigma) = d^{\mathcal{K}}(\tau)$ iff $c^{\mathcal{J}}(\sigma) = c^{\mathcal{K}}(\tau)$.
\end{definition}

If $d^{\mathcal{J}}(\sigma) \neq d^{\mathcal{K}}(\tau)$ for two simplicial colorings $c$ and $d$ satisfying the relation $c \sqsubseteq d$, then we can obtain $c^{\mathcal{J}}(\sigma) \neq c^{\mathcal{K}}(\tau)$ \cite{SWL2021}. This conclusion implies that if $c$ refines $d$, then $c$ is capable of distinguishing all non-isomorphic simplicial complex pairs that d can distinguish. In this sense, we consider $c$ to be at least as powerful as $d$.

Equipped with this background knowledge, we are now set up to prove the conclusions in the main article. To begin with, we introduce a slightly weak version of Theorem \ref{the:HWL}.

\begin{lemma}
\label{lemma:HWL}
HWL is at least as powerful as Weisfeiler-Lehman (WL) test in distinguishing non-isomorphic simplicial complexes.
\end{lemma}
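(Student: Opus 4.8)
The plan is to establish a refinement relation between the colorings produced by HWL and by the WL test, from which "at least as powerful" follows immediately by the refinement machinery set up just above the lemma. Concretely, I would run HWL on the clique-complex lifting of a graph and compare the colors it assigns to the $0$-simplices (the flower core) with the colors that WL assigns to the corresponding nodes. The target claim is that the HWL coloring restricted to $0$-simplices refines the WL coloring, i.e.\ $c_{\mathrm{HWL}} \sqsubseteq c_{\mathrm{WL}}$ on the node set; since refinement means $c$ can distinguish every pair that $d$ can, this yields that HWL distinguishes every pair of non-isomorphic complexes (graphs) that WL distinguishes.

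First I would set up the induction on the iteration count $t$. The inductive hypothesis is that for all simplicial complexes $\mathcal{J},\mathcal{K}$ and all nodes $v\in\mathcal{J}, u\in\mathcal{K}$, equality of HWL colors $c^{(t)}_{\mathrm{HWL}}(v)=c^{(t)}_{\mathrm{HWL}}(u)$ implies equality of WL colors $c^{(t)}_{\mathrm{WL}}(v)=c^{(t)}_{\mathrm{WL}}(u)$. The base case is trivial since both tests start from a uniform initial coloring. For the inductive step I would unfold one HWL update, $c^{(t+1)}(\sigma)=\operatorname{Hash}\bigl(c^{(t)}(\sigma),\multiset{c^{(t)}(\tau)\mid\tau\in\mathcal{N}(\sigma)}\bigr)$ with $\mathcal{N}(\sigma)=\bigcup_p\mathcal{N}_p(\sigma)$, and argue that the $t$-th HWL color of a node already determines its one-hop WL neighborhood multiset. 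The key geometric observation is that in the clique complex, a node $v$ and a node $w$ are WL-adjacent exactly when the edge $[v,w]$ is a $1$-simplex, and that edge appears as a neighbor of $v$ in the bipartite graph $\mathcal{G}_1$; thus the $1$-petal neighbors of $v$ encode precisely the WL adjacency structure around $v$. So the HWL neighborhood multiset for $v$ (restricted to its $\mathcal{G}_1$ neighbors) carries at least as much information as the WL neighborhood multiset $\multiset{c^{(t)}(w)\mid w\in N(v)}$.

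The main obstacle I anticipate is a one-step bookkeeping mismatch: HWL neighbors of a node $v$ are the incident \emph{simplices} (edges, triangles, \ldots), not the adjacent nodes themselves, so a single HWL update does not literally reproduce a WL update — it takes information through the petals rather than directly node-to-node. I would handle this by tracking colors over a bounded number of HWL rounds and showing that within a constant number of extra iterations the edge colors in the $1$-petal faithfully transmit the colors of their endpoint nodes back to the core, so that the composite effect refines one WL round. Because the $\operatorname{Hash}$ in HWL is injective on its multiset argument and retains the simplex's own previous color, no information is lost in this round-trip; the injectivity is exactly what guarantees the refinement direction. I would make the argument uniform over the two complexes $\mathcal{J},\mathcal{K}$ so that the conclusion is stated in the $c^{\mathcal{J}}\sqsubseteq d^{\mathcal{J}}$ form required by the definition of refinement.

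Finally I would conclude by invoking the refinement-implies-power principle stated in the excerpt: since $c^{(t)}_{\mathrm{HWL}}\sqsubseteq c^{(t)}_{\mathrm{WL}}$ at every iteration, whenever WL separates $\mathcal{J}$ and $\mathcal{K}$ by producing differing color histograms, HWL also produces differing histograms, so HWL is at least as powerful as WL. This proves Lemma~\ref{lemma:HWL}; the strictness asserted in Theorem~\ref{the:HWL} would then be obtained separately by exhibiting a single pair of graphs that WL cannot distinguish but HWL can (for instance a pair differing only in their triangle/higher-clique counts, which WL is blind to but which HWL sees through the $2$-petal), which I would defer to the proof of the strict version rather than include here.
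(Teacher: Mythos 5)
Your proposal is correct and follows essentially the same route as the paper: both establish the refinement $c_{\mathrm{HWL}}\sqsubseteq c_{\mathrm{WL}}$ on the $0$-simplices by induction, restrict attention to the $1$-petal neighbors in $\mathcal{G}_1$, and resolve the node-to-simplex bookkeeping mismatch by letting two HWL rounds (core $\to$ edges $\to$ core) simulate one WL round, with injectivity of the Hash guaranteeing no information is lost in the round trip. The paper simply makes your ``bounded number of extra iterations'' precise as the relation $b^{(2t)}\sqsubseteq a^{(t)}$ for an auxiliary restricted coloring $b$.
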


\begin{proof}
\label{proof:HWL}
Let notion $a^{(t)}$ denote the coloring of the Weisfeiler-Lehman test with the coloring update rule  $ a^{(t+1)}(v)=\operatorname{Hash}$ $ \left(a^{(t)}(v), \multiset{a^{(t)}(u) | u \in N(v)} \right)$ and $c^{(t)}$ the coloring of HWL test for the same nodes in $\mathcal{K}$ using the rule $c^{(t+1)}=$ $\operatorname{Hash}$ $\left( c^{(t)}(\sigma), \multiset{c^{(t)}(\tau) | \tau \in \mathcal{N}(\sigma)} \right)$.  Note that the function $\operatorname{Hash}$ $ \left(a^{(t)}(v), \multiset{a^{(t)}(u) | u \in N(v)} \right)$ and $\operatorname{Hash}$  $ \left(\multiset{a^{(t)}(u) | u \in N(v)\cup \{v\} }\right)$ are equivalent.

We additionally introduce $b^{(t)}$ to represent the coloring of restricted HWL that utilizes information no larger than 1-simplex, i.e., the refine rule can be represented as $b^{(t+1)}(\sigma) = \operatorname{Hash}\left( b^{(t)}(\sigma), \multiset{b^{(t)}(\tau) | \tau \in \mathcal{N}_1(\sigma)} \right)$. Here, $\mathcal{N}_1(\sigma)$ denotes the set of neighbours of $\sigma$ in the bipartite graph $\mathcal{G}_1$.  It's trivial to show $c$ refines $b$ since it considers the additional colors from higher-order simplices. We now only have to prove that $b^{(2t)} \sqsubseteq a^{(t)}$ by induction.

The base case holds for all colors are the same at initialization.
For the induction step, suppose it satisfies $b^{(2t+2)}(v) = b^{(2t+2)}(u)$ for any two 0-simplices $v$ and $u$ in two arbitrary complexes. 
The inputs to the Hash function need to be the same as the SWL coloring is an injective mapping.
Hence, we can obtain $b^{(2t+1)}(v)=b^{(2t+1)}(u)$ and $\multiset{b^{(2t+1)}(\sigma)|\sigma \in \mathcal{N}_1(v) } = \multiset{b^{(2t+1)}(\tau)|\tau \in \mathcal{N}_1(u) }$.  By unwrapping the hash function, we can get
 \begin{equation}
    \begin{split}
       b^{(2t+1)}(\sigma) 
     & = \operatorname{Hash}\left(b^{(2t)}(\sigma), \multiset{b^{(2t)}(w)|w \in \mathcal{N}_1(\sigma)} \right)\\
     & =\operatorname{Hash}\left(b^{(2t)}(\sigma), \multiset{b^{(2t)}(w)|w \in N(v)\cup \{v\}}\right).
    \end{split}
\end{equation}

Similarly, we can obtain 
\begin{align}
    b^{(2t+1)}(\tau) = \operatorname{Hash}\left(b^{(2t)}(\tau),\multiset{b^{(2t)}(w)|w \in N(u)\cup \{u\}}\right).
\end{align}

By removing the same color $b^{(2t)}(\sigma)$ and $b^{(2t)}(\tau)$ from the input, we can further obtain that  $\multiset{b^{(2t)}(w)|w \in N(v)\cup\{v\}} = \multiset{b^{(2t)}(w)|w \in N(u)\cup\{u\}}$. According to the induction hypothesis $b^{(2t)} \sqsubseteq a^{(t)}$, it can be deduced that $\multiset{a^{(t)}(w)|w \in N(v)\cup\{v\}} = \multiset{a^{(t)}(w)|w \in N(u)\cup\{u\}}$. We can finally conclude that $a^{(t+1)}(v)=a^{(t+1)}(u)$, which implies $b^{(2t+2)} \sqsubseteq a^{(t+1)}$.
\end{proof}

\begin{figure}[!t]
\centering
\includegraphics[width=0.66\linewidth]{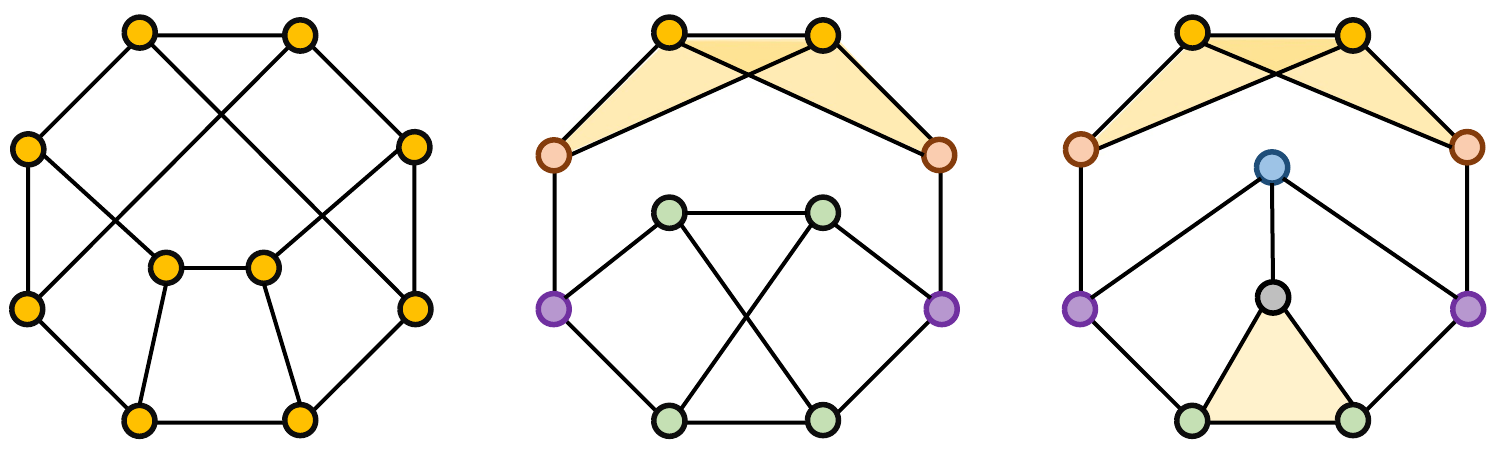}
\caption{Three non-isomorphic graphs that are indistinguishable by WL but distinguishable by HWL and SHWL with clique complex lifting. The incorporation of higher-order information in HWL and SHWL enables nodes to exhibit a more extensive diversity in color compared to the WL test, consequently yielding superior performance in both node-level and graph-level tasks.} 
\label{fig:HWL_toy}
\end{figure}

\begin{proof}[\textbf{Proof of Theorem \ref{the:HWL}}]
In accordance with Lemma \ref{lemma:HWL}, we only need to provide a pair of non-isomorphic graphs that WL fails to distinguish, while HWL can distinguish with the assistance of a clique complex lifting transition. Any two graphs in Figure \ref{fig:HWL_toy} constitute such a pair that satisfies these criteria.
\end{proof}

Moreover, we can simplify the coloring rule for simplices whose order is larger than 0 to nonvanishing linear functions, resulting in a simplified version of HWL, termed SHWL. 
The nonvanishing linear function implies that the polynomial coefficient will never be 0. 
We will demonstrate that the SHWL test remains more powerful than the WL test.

\begin{lemma}
\label{lemma:SHWL}
SHWL is at least as powerful as Weisfeiler-Lehman (WL) test in distinguishing non-isomorphic simplicial complexes.
\end{lemma}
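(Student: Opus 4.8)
The plan is to mirror the structure of the proof of Lemma~\ref{lemma:HWL}, adapting it to the weakened (linear) coloring rule of SHWL. The key observation is that the only change from HWL to SHWL is that the update for simplices of order $>0$ uses a $\operatorname{Linear}$ function rather than an injective $\operatorname{Hash}$. Since a nonvanishing linear combination is, in general, \emph{not} injective on multisets, I cannot simply reuse the ``inputs to the Hash function must be equal'' step verbatim. However, the crucial point is the \emph{direction} of the refinement claim we need: to show SHWL is at least as powerful as WL, it suffices to prove that the SHWL coloring restricted to $0$-simplices refines the WL coloring, i.e. $\tilde{b}^{(2t)} \sqsubseteq a^{(t)}$, where $\tilde{b}$ denotes the restricted SHWL coloring using only $1$-simplex information. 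Refinement only requires that SHWL \emph{separates} whatever WL separates; it does not require SHWL's intermediate simplex colors to be injective.

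First I would fix notation exactly as in the proof of Lemma~\ref{lemma:HWL}: let $a^{(t)}$ be the WL coloring on $0$-simplices, and let $\tilde{b}^{(t)}$ be the SHWL coloring restricted to information no higher than $1$-simplices, with the $1$-simplex update given by a nonvanishing $\operatorname{Linear}$ map and the $0$-simplex update still given by an injective $\operatorname{Hash}$. As before, the full SHWL coloring $c$ refines $\tilde{b}$ because it incorporates strictly more neighborhood colors, so it is enough to establish $\tilde{b}^{(2t)} \sqsubseteq a^{(t)}$ by induction on $t$. The base case is immediate since all simplices share a common initial color. For the inductive step, I would assume $\tilde{b}^{(2t)} \sqsubseteq a^{(t)}$ and suppose $\tilde{b}^{(2t+2)}(v) = \tilde{b}^{(2t+2)}(u)$ for two $0$-simplices $v,u$ in arbitrary complexes.

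The engine of the argument is the injectivity of the $0$-simplex Hash at step $2t+2$: equality of the output forces equality of its inputs, so $\tilde{b}^{(2t+1)}(v) = \tilde{b}^{(2t+1)}(u)$ and the neighboring multisets of $1$-simplex colors agree, $\multiset{\tilde{b}^{(2t+1)}(\sigma)\mid \sigma \in \mathcal{N}_1(v)} = \multiset{\tilde{b}^{(2t+1)}(\tau)\mid \tau \in \mathcal{N}_1(u)}$. Here is where SHWL diverges from HWL: each $\tilde{b}^{(2t+1)}(\sigma)$ is now $\operatorname{Linear}\bigl(\tilde{b}^{(2t)}(\sigma), \multiset{\tilde{b}^{(2t)}(w)\mid w \in N(v)\cup\{v\}}\bigr)$ rather than an injective hash of it. I expect this to be the main obstacle: I must argue that matching \emph{sums/linear images} of the lower-order data still propagates enough information to recover equality of the underlying WL multisets. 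The way through is to use the clique complex lifting and the nonvanishing hypothesis: after lifting, every $1$-simplex $\sigma$ containing $v$ corresponds bijectively to a neighbor, the common color $\tilde{b}^{(2t)}(\sigma)$ (identical across all simplices at the initialized/refined level) can be cancelled, and the nonvanishing coefficients guarantee that equal linear outputs on matched multiset arguments force the multiset arguments themselves to coincide as multisets — so $\multiset{\tilde{b}^{(2t)}(w)\mid w\in N(v)\cup\{v\}} = \multiset{\tilde{b}^{(2t)}(w)\mid w\in N(u)\cup\{u\}}$.

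Once that multiset equality is in hand, the induction hypothesis $\tilde{b}^{(2t)} \sqsubseteq a^{(t)}$ converts it to $\multiset{a^{(t)}(w)\mid w\in N(v)\cup\{v\}} = \multiset{a^{(t)}(w)\mid w\in N(u)\cup\{u\}}$, whence the WL update yields $a^{(t+1)}(v) = a^{(t+1)}(u)$ and therefore $\tilde{b}^{(2t+2)} \sqsubseteq a^{(t+1)}$, closing the induction and proving Lemma~\ref{lemma:SHWL}. To then upgrade ``at least as powerful'' to the \emph{strict} separation asserted in Theorem~\ref{the:SHWL}, I would reuse the same witness as in the proof of Theorem~\ref{the:HWL}: the graphs in Figure~\ref{fig:HWL_toy} are indistinguishable by WL yet distinguished by SHWL with clique complex lifting, giving a strict gap. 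The delicate part to get right is precisely the propagation through the nonvanishing linear map; the safest framing is to keep the lower-order simplex colors uniform (so the linear step acts on a single cancellable constant plus the node-color multiset), reducing the linear map to an injective-on-the-relevant-domain operation, which is exactly what the nonvanishing condition buys us.
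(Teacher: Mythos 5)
Your proposal follows essentially the same route as the paper's proof: the same three colorings $a$, $b$ (restricted), $c$, the same induction showing $b^{(2t)} \sqsubseteq a^{(t)}$, the same use of injectivity of the $0$-simplex Hash to extract the multiset of $1$-simplex colors, and the same resolution of the linear-update step by cancelling the common edge color and invoking the nonvanishing coefficients. The delicate point you flag is handled in the paper exactly as you propose, so this is a faithful reconstruction of the intended argument.
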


\begin{proof} 
\label{proof:SHWL}
Let notion $a^{(t)}$ denote the coloring of WL test with the rule  $ a^{(t+1)}(v)=\operatorname{Hash}$ $ \left(a^{(t)}(v), \multiset{a^{(t)}(u) | u \in N(v)} \right)$ and $c^{(t)}$ the coloring of SHWL for the same nodes in $\mathcal{K}$ using $c^{(t+1)} =$ $ \operatorname{Hash}\left( c^{(t)}(\sigma), \multiset{c^{(t)}(\tau) | \tau \in \mathcal{N}(\sigma)} \right)$.

We additionally introduce $b^{(t)}$ to represent the coloring of restricted SHWL that utilizes information no larger than 1-simplex, i.e. the refine rule can be represented as $b^{(t+1)}(\sigma) = \operatorname{Hash}\left( b^{(t)}(\sigma), \multiset{b^{(t)}(\tau) | \tau \in \mathcal{N}_1(\sigma)} \right)$. Here, $\mathcal{N}_1(\sigma)$ denotes the set of neighbours of $\sigma$ in the bipartite graph $\mathcal{G}_1$.  It's trivial to show $c \sqsubseteq b$ since it considers the additional colors from higher-order simplices. We now only have to prove that $b^{(2t)} \sqsubseteq a^{(t)}$ by induction.

The base case holds for all colors are the same at initialization.
 For the induction step, suppose it satisfies $b^{(2t+2)}(v) = b^{(2t+2)}(u)$ for any two 0-simplices $v$ and $u$ in two arbitrary complexes. Then we can obtain $b^{(2t+1)}(v)=b^{(2t+1)}(u)$ and $\multiset{b^{(2t+1)}(\sigma)|\sigma \in \mathcal{N}_1(v) } = \multiset{b^{(2t+1)}(\tau)|\tau \in \mathcal{N}_1(u) }$, since these are the arguments of the hash function.  By unwrapping the coloring rule, we can obtain
 \begin{equation}
    \begin{split}
       b&^{(2t+1)}(\sigma) 
      = \operatorname{Linear}\left(b^{(2t)}(\sigma), \multiset{b^{(2t)}(w)|w \in \mathcal{N}_1(\sigma)} \right)\\
     & =\operatorname{Linear}\left(b^{(2t)}(\sigma), \multiset{b^{(2t)}(v), b^{(2t)}(w)|w \in N(v)}\right) \\
     & = \operatorname{Linear}\left( \multiset{b^{(2t)}(w)|w \in N(v)\cup \{v, \sigma\} }\right).
    \end{split}
\end{equation}

Similarly, we can obtain 
\begin{align}
    b^{(2t+1)}(\tau) = \operatorname{Linear}\left( \multiset{b^{(2t)}(w)|w \in N(u)\cup \{u, \tau\} }\right).
\end{align}

Since the coloring $b^{(2t)}(\sigma)$ and $b^{(2t)}(\tau)$ are the same, we can further obtain  $\operatorname{Linear}\left( \multiset{b^{(2t)}(w)|w \in N(v)\cup \{v} \}\right)=\operatorname{Linear}\left( \multiset{b^{(2t)}(w)|w \in N(u)\cup \{u\} }\right)$ by removing the same color from the linear function. Because the linear function is nonvanishing, we can finally conclude that $a^{(t+1)}(v)=a^{(t+1)}(u)$, which implies $b^{(2t+2)} \sqsubseteq a^{(t+1)}$.
\end{proof}

\begin{proof} [\textbf{Proof of Theorem \ref{the:SHWL}}]
According to Lemma \ref{lemma:SHWL}, we only have to offer a pair of non-isomorphic graphs that WL fails to distinguish, while SHWL can distinguish with the help of clique complex lifting transition. It might as well to assume the nonvanishing linear function to be a summation function, and such a non-isomorphic graph pair is demonstrated in Figure \ref{fig:HWL_toy}.
\end{proof}

\clearpage
\section{Relation to other GCN models}
\label{appendix:relation_to_GCNs}

If we consider only the information transfer process based on the edge (1-simplex) in the bipartite graph $\mathcal{G}_1$, then the higher-order FP adjacency matrix can be represented as:
\begin{equation}
    \tilde{\mathcal{A}_1} = \frac{1}{2}D^{-1/2}\mathcal{H}_1 \mathcal{H}_1^\top D^{-1/2}=\frac{1}{2}\left(D^{-1/2}AD^{-1/2}+I\right).
    \label{equ:A1}
\end{equation}
Here, we omit the subscript for $D_{1,v}$ to unload the notation, for it retains the same meaning as in pairwise graphs without causing ambiguity.
It can be drawn that the definition of FP adjacency matrices preserves the topological feature of the original graph. 
Additionally, there is no need to add self-loops since they are already implicitly included in the FP adjacency matrices.
Therefore, HiGCN can be viewed as an extension of vanilla spatial GNNs within the higher-order domain.


In particular, GPRGNN \cite{GPRGNN}, one of the state-of-the-art spectral GNNs related to HiGCN, can be considered a special case of HiGCN if we only account for information interactions between $0$-simplices and $1$-simplices, i.e., requiring parameter $P=1$. Moreover, GCN \cite{GCN} also constitutes a specific instance of our model if we employ a fixed low-pass filter and fix $P=1$.

On the other hand, HiGCN demonstrates enhanced flexibility over certain Hodge Laplacian-based simplicial GCNs, transcending the constraints imposed by information exchange exclusively through boundary operators.
Specifically, HiGCN generalizes simplicial convolution operations on simplicial complexes, encompassing SNN \cite{SNN2020} and SCNN \cite{SCNN2022}, if we apply fixed low-pass filters and require $P=1$ for node-level tasks.



\section{Random walk model}
\label{appendix:RW}

Graph neural networks can also be interpreted as a modified random walk of information over graph structures \cite{APPNP}. 
The main idea of random walks is to traverse a graph starting from a single node or a set of nodes and get sequences of locations \cite{HoRW}. 

In unbiased random walk models, suppose a random walker is located at node $i$ and he will wander to a neighbouring vertex $j$ with probability $A_{ij}/k_j$, reflecting an equal selection between $k_j$ neighbours. In mathematical, this process can be formulated as
\begin{equation}
    \pi_i(t) = \sum_j \frac{A_{ij}}{k_j} \pi_j(t-1).
\end{equation}
Here, $k_j$ stands for the degree of node $j$.
It can be expressed equivalently as
\begin{equation}
    \pi(t)=AD^{-1}\pi(t-1),
    \label{equ:RW}
\end{equation}
where $\pi(t) = \left(\pi_{1}(t),\cdots,\pi_{n}(t)\right)^\top$ and $D = \operatorname{diag}\left(k_1,\cdots,k_n\right)$.

Multiplying $D^{-1/2}$ from left sides of the equation (\ref{equ:RW}) simultaneously, we can obtain
\begin{equation}
    D^{-1/2}\pi(t)= \left[D^{-1/2}AD^{-1/2}\right] D^{-1/2}\pi(t-1).
    \label{equ:reducedA}
\end{equation}
Here, $D^{-1/2}AD^{-1/2}$ is a symmetric matrix and is referred to as a reduced adjacency matrix. Based on the reduced adjacency matrix, we can obtain the normalized graph Laplacian
\begin{equation}
    L = I - D^{-1/2}AD^{-1/2}.
\end{equation}

\section{Equivariance and invariance}
\label{appendix:symmetry}

Equivariance and invariance are fundamental concepts in understanding GNNs and their behavior when processing graph-structured data.
Given a pairwise graph $\mathcal{G}$ characterized by adjacency matrix $A$ and feature matrix $X$, a function $f$ is (node) permutation equivariant if $\mathbf{P}f(A, X)=f(\mathbf{P}A\mathbf{P}^\top, \mathbf{P}X)$ for any permutation operation $\mathbf{P}$. 
GNNs adhere to this equivariance property, ensuring consistent computation of functions irrespective of node permutations.

Generalizing pairwise GNNs, HiGCN demonstrates equivariance with respect to relabeling of simplices, which allows it to exploit symmetries in SCs. 
In accordance with recent endeavors in simplicial neural networks to examine various models in terms of equivariance \cite{SWL2021, SCNN2022}, our objective herein is to elucidate the fundamental equivariance properties intrinsic to the HiGCN model.

To formalize this property, we introduce the concept of simplex permutation equivariance and demonstrate that HiGCN fulfills this definition. 

Consider a simplicial complex $\mathcal{K}$ with a maximum simplex order of $\ell$ and a sequence $\mathbf{H}=\left(\mathcal{H}_1,\cdots,\mathcal{H}_\ell\right)$ of higher-order incidence matrices. 
Let $\mathbf{P} = \left(P, P_1,\cdots,P_\ell\right)$ represent a sequence of simplicial permutation matrices with $P\in \mathbb{R}^{n \times n}$ and $P_i\in \mathbb{R}^{n_i \times n_i}$. 
Denote the permutation features as $PX$ and the sequence of permutation higher-order incidence matrices $\left(P \mathcal{H}_1 P_1, \cdots, P\mathcal{H}_\ell P_\ell\right)$ as $\mathbf{PHP}^\top$.

\begin{definition}
A function $f$ is simplex permutation equivariant if $f\left(PX, \mathbf{PHP}^\top\right) = Pf\left(X, \mathbf{H}\right)$ for any sequence of permutation operators $\mathbf{P}$.
\end{definition}

\begin{theorem}
\label{the:equivariance}
The HiGCN model is simplex permutation equivariant.
\end{theorem}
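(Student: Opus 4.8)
The plan is to show directly from the explicit formula for HiGCN's output that permuting the simplices of all orders produces exactly the permuted output. I would work with the final closed form derived in the paper, namely
\begin{equation}
    f\left(X,\mathbf{H}\right) = \mathop{\Big|\!\Big|}\limits_{p=1}^P \left(\sum_{k=0}^K \gamma_{p,k} \tilde{\mathcal{A}_p^k} X \Theta_p \right) W,
\end{equation}
and substitute the permuted inputs $PX$ and $\mathbf{PHP}^\top$ into it. The key observation is that all trainable parameters $\gamma_{p,k},\Theta_p,W$ are untouched by permutations, so the entire argument reduces to tracking how $\tilde{\mathcal{A}}_p^k$ transforms. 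Thus the crux is a lemma computing $\tilde{\mathcal{A}}_p$ under the substitution $\mathcal{H}_p \mapsto P\mathcal{H}_p P_p$.

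First I would establish how each factor in $\tilde{\mathcal{A}}_p = \frac{1}{p+1}D_{p,v}^{-1/2}\mathcal{H}_p\mathcal{H}_p^\top D_{p,v}^{-1/2}$ behaves. The degree matrix obeys $D_{p,v} = \operatorname{diag}(\mathcal{H}_p\boldsymbol{1})$, and since $P_p$ merely reorders the columns of $\mathcal{H}_p$ (i.e.\ permutes the simplices within the $p$-petal) while $P$ reorders its rows (the nodes), the vertex degrees are simply relabeled by $P$. Concretely $D_{p,v} \mapsto P D_{p,v} P^\top$, and hence $D_{p,v}^{-1/2} \mapsto P D_{p,v}^{-1/2} P^\top$. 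The heart of the computation is then
\begin{equation}
    \left(P\mathcal{H}_p P_p\right)\left(P\mathcal{H}_p P_p\right)^\top = P\mathcal{H}_p P_p P_p^\top \mathcal{H}_p^\top P^\top = P\mathcal{H}_p\mathcal{H}_p^\top P^\top,
\end{equation}
where the inner petal-permutation cancels because $P_p$ is orthogonal ($P_p P_p^\top = I$). Assembling these facts gives $\tilde{\mathcal{A}}_p \mapsto P\tilde{\mathcal{A}}_p P^\top$, and by orthogonality of $P$ this conjugation structure is preserved under powers: $\tilde{\mathcal{A}}_p^k \mapsto P\tilde{\mathcal{A}}_p^k P^\top$.

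With the lemma in hand the theorem follows by substitution: each summand becomes $\gamma_{p,k}\, P\tilde{\mathcal{A}}_p^k P^\top PX\Theta_p = P\left(\gamma_{p,k}\tilde{\mathcal{A}}_p^k X\Theta_p\right)$, again using $P^\top P = I$, so the inner sum over $k$ factors the leading $P$ out. Because $P$ acts only on the node (row) index, it commutes with the per-petal linear maps $\Theta_p$ and with the column-wise concatenation $\|_{p=1}^P$, and likewise with the final right-multiplication by $W$; thus $P$ pulls out to the front of the whole expression, yielding $f\left(PX,\mathbf{PHP}^\top\right) = Pf\left(X,\mathbf{H}\right)$.

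The main obstacle is conceptual rather than computational: one must verify carefully that concatenation $\|$ and right-multiplication by $\Theta_p,W$ genuinely commute with left-multiplication by $P$. This is where the distinction between the node-permutation $P$ and the simplex-permutations $P_p$ matters — $P$ acts on rows (nodes) and hence commutes with every operation acting on the feature (column) dimension, while the $P_p$ must provably cancel internally so that they never surface in the final node-indexed output. I would state this commutation explicitly as the one nontrivial step, since it is precisely the structural reason that relabeling higher-order simplices leaves node-level predictions equivariant. The remaining algebra is routine once the cancellation $P_p P_p^\top = I$ and the conjugation-under-powers property are recorded.
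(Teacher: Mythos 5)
Your proposal is correct and follows essentially the same route as the paper's proof: compute that $\tilde{\mathcal{A}}_p$ transforms by conjugation to $P\tilde{\mathcal{A}}_p P^\top$ (using $P_pP_p^\top=I$ to cancel the petal permutations), propagate this through powers, and pull $P$ out of the sum, concatenation, and right-multiplications. Your treatment is, if anything, slightly more explicit than the paper's about why $D_{p,v}\mapsto PD_{p,v}P^\top$ and why $P$ commutes with the column-wise operations, but the argument is the same.
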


\begin{proof}
After a sequence of permutations $\mathbf{P}$, the higher-order incidence matrix $\mathcal{H}_p$ is transformed into $P\mathcal{H}_p P_p^\top$ and  $D_{p,v}$ becomes $PD_{p,v}P^\top$. Consequently,  the permuted FP adjacency matrix $\tilde{\mathcal{A}_p}$ is given by 
\begin{equation}
\begin{split}
    \tilde{\mathcal{A}_p}^\prime
     = & \frac{1}{p+1} {D_{p,v}^\prime}^{-1/2}\mathcal{H}_p^\prime{\mathcal{H}_p^\prime}^\top {D_{p,v}^\prime}^{-1/2}\\
     = & \frac{1}{p+1} \left(P D_{p,v}^{-1/2} P^\top \right)
    \left(P\mathcal{H}_p P_p^\top \right)
    \left(P_p {\mathcal{H}_p}^\top P^\top \right)\cdot\\
    &\left(P  D_{p,v}^{-1/2} P^\top \right)\\
     = & \frac{1}{p+1} P D_{p,v}^{-1/2} \mathcal{H}_p {\mathcal{H}_p}^\top D_{p,v}^{-1/2} P^\top \\ 
     = & P \tilde{\mathcal{A}_p} P^\top .
\end{split}
\end{equation}

Therefore, we can deduce $\tilde{\mathcal{A}_p^\prime}^k = P \tilde{\mathcal{A}_p^k} P^\top$. Subsequently, we can express the permuted convolution output as
\begin{equation}
    Y^\prime =   \mathop{\Big|\!\Big|}\limits_{p} \left(\sum_{k=0}^K \gamma_{p,k} P \tilde{\mathcal{A}_p^k} P^\top 
 PX \Theta_p \right) W = P Y.
\end{equation}

Moreover, as the employed non-linear activation function and readout function exhibit equivariance, the HiGCN model is simplex permutation equivariant.
\end{proof}

Comprehending equivariance and invariance in GNNs is of paramount importance for devising effective models and training methodologies. By taking these properties into account, GNN architectures can be developed to capitalize on equivariance in order to capture and propagate local structural information, while simultaneously ensuring invariance to concentrate on graph-level or task-specific features.

\section{Experimental details for quantification of higher-order interaction strengths}
\label{appendix:S_p}


\subsection{Construction of 1k null model}
\label{appendix:1k}

The null model \cite{nullmodel,zeng2023hyper} refers to a class of random networks endowing with some of the same properties as an actual network, and they are often employed to quantify the extent of some special properties.
The 1k null model \cite{YRM2023} is the network that has the same node number and edge number as the origin network.
The construction of the 1k null model with a changeable triangle number is divided into two parts: (1) node-picking and (2) relinking. 

In the node-picking process, we traverse all nodes in the graph and find node A with more than two neighbors. 
Based on node A, we pick two neighbor nodes node B and node C which are not connected.
We then pick neighbor nodes D and E of node B and node C, which should not be connected with node A. 
Notice that nodes D and E can not connect with each other, so the whole chain of D-B-A-C-E contains no triangle.

In the relinking step, we break the original edges $[B, D]$ and $[C, E]$, and then add edges $[B, C]$ and $[D, E]$ (red edges in Figure \ref{fig:1knull}). 
In this way, a new triangle $[A, B, C]$ (gray shade in  Figure \ref{fig:1knull}) is added with the number of nodes and edges stable.

\begin{figure}[h]
\centering
\includegraphics[width=0.4\linewidth]{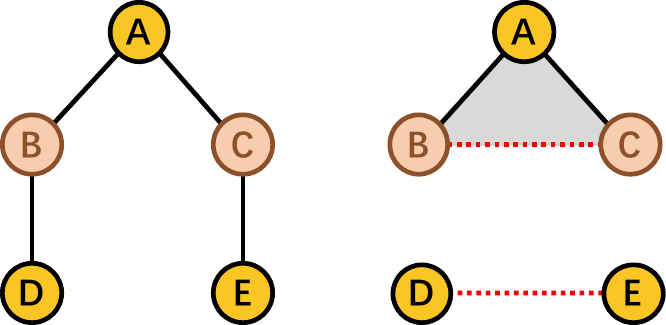}
\caption{The modification process of the 1k null model. The left and right figures show the partial structures of the networks before and after modification, respectively.}
\label{fig:1knull}
\end{figure}





\subsection{More experimental results}
\label{appendix: more_exp}

HiGCN achieves the optimal results in node classification tasks except for Texas, see Table \ref{tab:node_classify}.
It has also been noticed in Figure \ref{fig:stack5}(\textbf{d}) that Texas's higher-order interactions are pretty weak compared to its lower-order ones, along with the least triangles over all datasets (see Table \ref{tab:node_statistics}).
Therefore, few higher-order structures may lead to weak higher-order influence, along with degraded node classification performance of HiGCN. 
To verify this conjecture, we modify the number of higher-order structures by adjusting the connectivity of edges in the network while maintaining the degree distribution as in 1k null models.

Table \ref{tab:higherorder_classify_Texas} shows the node classification results under different relative triangle densities from 0 (original network) to $+ 50 \%$ in units of $10 \%$.
An increasing accuracy rank is observed between HiGCN and the benchmark with the rise of the triangles' density $\rho_2$.
In particular, HiGCN performs best at higher relative triangle densities $\rho_2 = 20 \%$, $40 \%$, and $50 \%$.

\begin{table*}[!ht] 
\centering
\begin{tabular}{cccccccccccc}
\toprule
$\rho_2$    &0   &10\%   &20\%   &30\%  &40\%  &50\%\\
\midrule
MLP
&91.45\spm{1.14}  &91.02\spm{0.98}  &91.90\spm{0.95}  &91.18\spm{1.11}  &91.74\spm{0.92}  &91.05\spm{0.95}\\
GCN      
&75.16\spm{0.96}  &64.36\spm{1.83}  &64.89\spm{2.16}  &64.07\spm{1.93}  &62.92\spm{2.34}  &64.75\spm{2.16}\\
GAT      
&78.87\spm{0.86}  &79.97\spm{1.03}  &78.89\spm{1.07}  &78.20\spm{1.20}  &77.28\spm{1.30}  &77.93\spm{1.42}\\
ChebNet  
&86.08\spm{0.96}  &82.10\spm{1.52}  &83.08\spm{1.09}  &79.21\spm{1.55}  &81.34\spm{1.48}  &81.41\spm{1.34}\\
BernNet  
&\textbf{93.12\spm{0.65}}  &\textbf{92.10\spm{0.95}}  &92.89\spm{0.92}  &\textbf{92.33\spm{1.08}}  &91.54\spm{1.02}  &91.57\spm{1.21}\\
GGCN  
&85.81\spm{1.72}  &85.95\spm{1.42}  &85.51\spm{1.67}  &83.84\spm{1.70}  &91.15\spm{1.02}  &83.95\spm{1.73}\\
APPNP  
&90.98\spm{1.64}  &89.87\spm{1.01}  &89.31\spm{1.05}  &89.08\spm{0.98}  &90.39\spm{1.10}  &89.57\spm{1.10}\\
GPRGNN 
&92.95\spm{1.30}  &86.10\spm{2.76}  &88.16\spm{1.13}  &83.05\spm{2.05}  &84.69\spm{1.77}  &83.54\spm{2.72}\\
\midrule
\textbf{HiGCN}
&92.15\spm{0.73}  &91.70\spm{1.06}  &\textbf{93.11\spm{0.87}}  &91.64\spm{1.14}  &\textbf{91.93\spm{0.84}} &\textbf{92.24\spm{1.41}}\\
\midrule
Rank
&3  &2  &\textbf{1}  &2  &\textbf{1} &\textbf{1}\\
\bottomrule
\end{tabular}
\caption{Node classification results on Texas with changeable higher-order densities: mean accuracy $(\%)\pm 95\%$ confidence interval. Bold values indicate the best result.}
\label{tab:higherorder_classify_Texas}
\end{table*}

The process of constructing a 1k null model can also be considered as adding specific noise to the network, and it can be found from Table \ref{tab:higherorder_classify_Texas} that our model excels in handling noise containing higher-order information.





\section{Experiments on large-scale datasets and computational analysis}
\label{appendix: large}

We further conduct experiments on three larger datasets: two homogeneous graphs (Ogbn-arxiv \cite{data:ogbn_arxiv} and Genius \cite{data:genius}) and one heterogeneous graph (Penn94 \cite{data:penn94}). 
We compare our method with MLP, S2V (simplex2vec) \cite{Simplex2vec}, as well as several state-of-art GNN models including GCN \cite{GCN}, GAT \cite{GAT2018}, ChebNet \cite{ChebNet}, GPRGNN \cite{GPRGNN},  LGGNN, DRGCN \cite{DRGCN}, GloGNN++ \cite{GloGNN}, and ACM-GCN++ \cite{ACM-GCN}.
The results presented in Table \ref{tab:bigdataset} demonstrate that our HiGCN model outperforms these comparative methods in the node classification task while maintaining a reasonable computational cost.


\begin{table}[!ht]
\centering
\resizebox{0.47\textwidth}{!}{
\begin{tabular}{cccc}
\toprule
Method & Penn94          & Ogbn-arxiv      & Genius \\
\midrule
MLP     & 73.61\spm{0.40} & 69.88\spm{0.27} & 86.68\spm{0.09} \\
S2V     & 80.60\spm{0.22} & 68.87\spm{0.12} & 83.21\spm{0.34} \\
GCN     & 82.47\spm{0.27} & 71.49\spm{0.30} & 87.42\spm{0.37} \\
GAT     & 81.53\spm{0.55} & 71.59\spm{0.38} & 55.80\spm{0.87} \\
ChebNet & 81.21\spm{0.32} & 71.12\spm{0.22} & 85.69\spm{0.75} \\
GPRGNN  & 81.38\spm{0.16} & 71.78\spm{0.18} & 90.09\spm{0.31} \\
LGGNN & N/A & 75.70\spm{0.18} & N/A\\
DRGCN & N/A & \s{76.11}\spm{0.09} & N/A\\
GloGNN++  & \s{85.74}\spm{0.42} &  N/A & 90.91\spm{0.13}\\
ACM-GCN++ & \f{86.08}\spm{0.43} &  N/A & 91.40\spm{0.07}\\
\midrule
1-HiGCN & 83.01\spm{0.44} & 73.86\spm{0.25} & 89.98\spm{0.73}\\
2-HiGCN & 85.66\spm{0.59} & 76.00\spm{0.41} & \thi{91.53}\spm{0.36}\\
3-HiGCN & \thi{85.68}\spm{0.67} & \f{76.41}\spm{0.53} & \s{91.60}\spm{0.61}\\
4-HiGCN & 84.98\spm{0.32} & \thi{76.01}\spm{0.38} & \f{91.66}\spm{0.43}\\
\bottomrule
\end{tabular}}
\caption{Node classification results on large-scale networks: mean accuracy $(\%)\pm 95\%$ confidence interval. The best results are in bold, while the second-best ones are
underlined.}
\label{tab:bigdataset}
\end{table}

In addition, we examine the computational complexity of HiGCN compared to other node classification baselines and report the average training time per epoch and average total running time in Table \ref{tab: runTime}.

\begin{table*}[!ht]
\centering
\begin{tabular}{ccccccccc}
\toprule
Dataset    & 2-HiGCN     & 3-HiGCN   & 4-HiGCN    & APPNP        & BERNNET     & GPRGNN       &CHEBNET     &GGCN \\
\midrule
Cora        & 2.8 / 0.6  & 3.0 / 0.7     & 3.3 / 0.9    &3.6 / 1.2   & 11.6 / 3.1  &4.3 / 0.9        &4.6/2.2  &5.2/2.4\\
Citeseer    & 3.0 / 0.7  & 3.2 / 0.7     & 3.4 / 0.9    &3.7 / 1.3   & 11.8 / 3.4  &4.5 / 1.0        &5.4/2.4  &2.4/2.7\\
Pubmed      & 4.8 / 1.0  & 5.7 / 1.2     & 6.3 / 1.3    &3.9 / 2.0   & 11.1 / 4.9  &4.5 / 1.8        &6.0/3.0  &7.2/7.3\\
Computers   & 6.2 / 1.3  & 6.5 / 1.4     & 7.0 / 1.6    &6.0 / 2.5   & 29.3 / 8.6  &6.5 / 1.6    &21.1/12.7  &18.3/15.1\\
Photo       & 5.3 / 1.1  & 5.6 / 1.2     & 6.3 / 1.3    &5.8 / 2.8   & 15.3 / 6.2  &4.5 / 1.3    &19.8/11.9  &19.0/10.3\\
\midrule
Chameleon   & 4.4 / 1.3  & 4.8 / 1.5     & 5.3 / 1.6    &3.9 / 0.8   & 11.0 / 2.8  &4.4 / 1.0   &5.0/2.3  &4.8/5.2\\
Actor       & 2.7 / 0.8  & 3.1 / 0.9     & 3.5 / 1.1    &3.8 / 0.8   & 10.9 / 3.5  &4.3 / 0.9   &4.2/1.8  &4.7/4.9\\
Squirrel    & 8.5 / 2.6  & 8.9 / 2.8     & 9.6 / 3.0    &4.3 / 0.9   & 15.7 / 4.9  &4.3 / 2.1   &6.3/3.5  &11.4/14.7\\
Texas       & 2.4 / 0.7  & 2.6 / 0.8     & 2.8 / 0.8    &3.8 / 0.8   & 11.3 / 2.4  &4.3 / 1.0   &2.5/0.9  &2.1/1.5\\
Wisconsin   & 2.5 / 0.7  & 3.0 / 0.9     & 3.2 / 1.0    &3.9 / 0.8   & 11.0 / 2.6  &4.4 / 0.9   &2.6/1.0  &2.6/1.9\\
\bottomrule
\end{tabular}
\caption{Efficiency on node classification experiments: Average running time per epoch(ms)/ average total running time(s).}
\label{tab: runTime}
\end{table*}

Furthermore, when the targeted graph is not in the form of SCs, one should also consider the one-time preprocessing procedure for graph lifting.
Specifically, the number of $p$-simplices in a graph with $n$ nodes and $m$ edges is upper-bounded by $\mathcal{O}(n^{p-1})$, and they can be enumerated in $\mathcal{O}( a\left(\mathcal{G}\right)^{p-3} m)$ time \cite{chiba1985arboricity}, where $a\left(\mathcal{G}\right)$ is the arboricity of the graph $\mathcal{G}$, a measure of graph sparsity.
Since arboricity is demonstrated to be at most $\mathcal{O}(m^{1/2})$ and $m \leq n^2$, all $p$-simplices can thus be listed in $\mathcal{O}\left( n^{p-3} m \right)$.
Besides, the complexity of finding $2$-simplex is estimated to be $\mathcal{O}(\left\langle k \right\rangle m )$ with the Bron–Kerbosch algorithm \cite{find_cliques1973}, where $\left \langle k \right \rangle$ denotes the average node degree, typically a small value for empirical networks.

\section{Datasets}
\label{appendix: datasets}

\paragraph{Node classification.}
Table \ref{tab:node_statistics} provides the statistics of all the datasets used in the node classification experiments.
The homophily level of $\mathcal{G}$ is measured by $Homophily(\mathcal{G})=\frac{\left|\left\{(u, v):(u, v) \in \mathcal{E} \wedge y_v=y_u\right\}\right|}{n_1}$ \cite{homo2020}, where $y_v$ is the label of nodes $v$ and $n_1 = |\mathcal{E}|$ denotes the number of edges.

We additionally scale HiGCN to three large-scale datasets: Penn94, Ogbn-arxiv, and Genius.
\textbf{Penn94} \cite{data:penn94} is a social network extracted from Facebook where nodes represent students and edges denote the communication relationship of students.
\textbf{Ogbn-arxiv} \cite{data:ogbn_arxiv} is a paper citation network of arxiv papers extracted from the Microsoft Academic Graph (MAG) where the nodes denote the papers and the edges denote the citation relationship.
\textbf{Genius} \cite{data:genius} is a social network extracted from a website for crowdsourced annotations of song lyrics named ``genius.com".
The nodes and edges of it represent users and the following relationship of users.
%

\begin{table*}[!htbp]
\centering
\begin{tabular}{cccccccc}
\toprule
Network   & Classes & Features & Homophily & $n $  & $n_1$  & $n_2$  &$\left \langle k \right \rangle$ \\
\midrule
Cora      & 7       & 1433     & 0.8099    & 2708  & 5278   & 1630      &3.90    \\
Citeseer  & 6       & 3703     & 0.7355    & 3327  & 4552   & 1167      &2.74\\
PubMed    & 5       & 500      & 0.8023    & 19717 & 44324  & 12520     &4.50\\
Computers & 10      & 767      & 0.7772    & 13752 & 245861 & 1527469   &35.76\\
Photo     & 8       & 745      & 0.8272    & 7650  & 119081 & 717400    &31.13\\
\midrule
Chameleon & 5       & 2325     & 0.2305    & 2277  & 31421  & 343066    &27.60\\
Actor     & 5       & 2089     & 0.2187    & 7600  & 26659  & 7121      &7.02\\
Squirrel  & 5       & 932      & 0.2224    & 5201  & 198493 & 9595609   &76.33\\
Texas     & 5       & 1703     & 0.0871    & 183   & 279    & 67        &3.05\\
Wisconsin & 5       & 1703     & 0.1921    & 251   & 450    & 118       &3.56\\
\midrule
Penn94    & 2       & 5        & 0.470     & 41554  & 1362229  & 7207796    &65.6\\
Ogbn-arxiv& 40       & 128        & 0.655     & 169343 & 1157799  & 2233524      &13.7\\
Genius  & 2       & 12        & 0.618     & 421961 & 984979  & 1968352   &4.7\\
\bottomrule
\end{tabular}
\caption{Statistics of node classification datasets.}
\label{tab:node_statistics}
\end{table*}

\paragraph{Graph classification.}
In this study, we investigate various datasets from diverse domains to evaluate the performance of the proposed framework. The datasets are divided into two main categories: bioinformatics datasets and social network datasets.

The bioinformatics datasets comprise MUTAG, PTC, and PROTEINS.
\textbf{MUTAG} \cite{data:MUTAG} comprises  188 mutagenic aromatic and heteroaromatic nitro compounds with seven discrete labels. The goal is to identify mutagenic molecular compounds for potential drug development.
\textbf{PTC} \cite{data:PTC} involves classifying graph-structured compounds based on their carcinogenic properties in rodents, containing 344 chemical compounds with 19 discrete labels that indicate carcinogenicity for male and female rats.
\textbf{PROTEINS} \cite{data:proteins2005} aims to classify proteins into enzyme and non-enzyme structures. Nodes represent secondary structure elements (SSEs) connected by edges if they are neighbors in the amino-acid sequence or in 3D space. There are three discrete labels: sheet, helix, and turn structures.
As for social network datasets, \textbf{IMDB-B} \& \textbf{IMDB-M} \cite{data:IMDB-REDDIT} are movie collaboration datasets consisting of actors' and actresses' ego-networks from IMDB.  Nodes represent actors/actresses, and edges connect them if they appear in the same movie. Each graph originates from a specific movie genre and the task is to classify their genre. The primary difference between the two datasets lies in the number of categories: IMDB-B has two (Action and Romance), while IMDB-M includes three (Comedy, Romance, and Sci-Fi).
Table \ref{tab:graph_datasets} provides the statistics of the graph classification datasets employed in this study.

\begin{table*}[!htbp]
\centering
\begin{tabular}{ccccccc}
\toprule
Dataset   
& Graphs
& Classes   
& $\left \langle n \right \rangle$ 
& $\left \langle n_1 \right \rangle$ 
& $\left \langle n_2 \right \rangle$ 
& $\left \langle k \right \rangle$ \\  
\midrule
MUTAG       & 188   & 2     & 17.93     & 19.79     & 0.00     &2.21\\
PTC         & 344   & 2     & 25.56     & 25.96     & 0.04  &2.03\\
PROTEINS    & 1113  & 2     & 39.06     & 72.82     & 27.40  & 3.73        \\
IMDB-B      & 1000  & 2     & 19.77     & 96.53     & 391.99 & 9.77\\
IMDB-M      & 1500  & 3     & 13.00     & 65.94     & 305.90 & 10.14\\
\bottomrule
\end{tabular}
\caption{Statistics of graph classification datasets.}
\label{tab:graph_datasets}
\end{table*}

\paragraph{Simplicial data imputation.}
We extract three coauthorship complexes from DBLP \cite{data:DBLP-Benson2018}, History and Geology \cite{data:MAG-His-Geo}. In these coauthorship complexes, a paper with $p+1$ authors is represented by a $p$-simplex, and the $p$-simplicial signal corresponds to the number of collaborative publications among these authors. 
To mitigate the potential noise introduced by incidental collaborations, we consider coauthor groups with more than two  collaborative publications as simplices. 
Following the standard pipeline for this task \cite{SNN2020}, missing values are artificially introduced by replacing a portion of the signals with a constant.
Table \ref{tab:CCs_statistics} provides the statistics of the coauthorship complexes that are employed.

\begin{table*}[!htbp]
\centering
\begin{tabular}{cccccccc}
\toprule
SCs   & $n$   & $n_1$ & $n_2$   & $n_3$ & $n_4$ &$n_5$ \\  
\midrule
DBLP    &41910  &23334  &10951  &6279   &4684   &3756 \\
Geology &52259  &52373  &50051  &62869  &98846  &159879\\
History &29354  &5554   &9334   &24428  &65770  &166285\\
\bottomrule
\end{tabular}
\caption{Statistics of coauthorship complexes.}
\label{tab:CCs_statistics}
\end{table*}

\section{Supplementary experimental settings}
\label{appendix: exp_setting}

The well-known small-world effect \cite{SmallWorld1967} in network science informs that even large-scale networks possess a finite diameter, implying that the neighbor scope under consideration (denoted by $K$) need not be excessively large, typically around 6. In all our experiments, we set $K=10$, a value considerably larger than the average network diameter.
%

\paragraph{Node classification.}
We train the proposed HiGCN model with the learning rate $lr \in \{0.01, 0.05, 0.1, 0.2, 0.3\}$ and the weight decay $wd \in \{0.0, 0.0001, 0.001, 0.005, 0.1\}$.
We leverage Adam as the model optimizer and establish the maximum number of epochs at 1000 for all datasets. 
We utilize 2 linear layers with 32 hidden units for the NN component. 
The activation function is log Softmax, and the loss function is negative log Likelyhood loss (NLL loss). 
We employ RayTune to randomly select hyperparameters with the goal of optimizing the accuracy score on the validation set.

\paragraph{Graph classification.}
We set the parameter $P=2$ in this experiment, implying that the highest order of the simplices under consideration is $2$-simplices, i.e., each graph is lifted to a clique $2$-complex, where $0$-simplices are initialized with the original node signals as prescribed in \cite{GIN2019}. 
Our training process starts from an initial learning rate, which decayed after a fixed amount of epochs.
We implement readout operations by conducting averaging or summation depending on the dataset to calculate feature vectors derived from the hidden states. Although we do not deploy any sophisticated simplicial complex pooling operator within this study, we anticipate that this concept presents a compelling trajectory for future research endeavors.
We run a grid search to tune batch size among $\{32, 64\}$, hidden dimension among $\{32, 64\}$, dropout rate among $\{0.0, 0.3\}$, initial learning rate among $\{0.0005, 0.001, 0.005, 0.01\}$.  
We follow the same evaluation protocol of \cite{GIN2019}, conducting a 10-fold cross-validation procedure and reporting the maximum average validation accuracy across folds.

\paragraph{Simplicial data imputation.} 
We use a two-layer (MLP) with 32 hidden units for the NN component.
We adopt the $\ell_1$ norm to train the NNs over known signals for 500 iterations using the Adam optimizer with a learning rate  $ lr \in \{0.001, 0.005,  0.01, 0.05\}$, $\alpha \in \{0.1,0.3,0.5,0.7,0.9\}$ and weight decay $wd=0$. Each experiment is performed for 10 different random weight initializations.
We run a grid search to select hyperparameters with the goal of optimizing the Kendall correlation for the entire signal.

\paragraph{Baselines.}
For the baseline methods BernNet \cite{BernNet}, GGCN \cite{GGCN}, APPNP \cite{APPNP}, GPRGNN \cite{GPRGNN}, GIN \cite{GIN2019}, S2V \cite{Simplex2vec}, SNN \cite{SNN2020}, SGAT \cite{SGAT}, SGATEF \cite{SGAT} and MPSN \cite{SWL2021}, we directly use the code provided by the authors. 
For the baseline methods LGGNN, DRGCN \cite{DRGCN}, GloGNN++ \cite{GloGNN} and ACM-GCN++ \cite{ACM-GCN}, we directly use the results provided by the authors.
For the remaining methods, we use the Pytorch Geometric library \cite{graphwithPyTorch2019} for implementations. 
We adopt the best hyperparameters provided by the authors if available; otherwise, hyperparameters are searched within the same hyperparameter space as our model. The hyperparameters adopted can be found in our code.

\paragraph{Computing infrastructure.} 
We utilize NetworkX, Pytorch, and Pytorch Geometric for model construction. All experiments are conducted on two Nvidia GeForce RTX 3060 GPUs with 64 GB memory.

\end{document}